\newcommand{\R}{{\mathds{R}}}
\newcommand{\indic}[1]{\mathds{1}_{#1}}
\newcommand{\E}[1]{\mathrm{E}(#1)}
\newcommand{\ea}{(1+1)~EA\xspace}
\newcommand{\rls}{RLS\xspace}
\newcommand{\xopt}{x_{\mathrm{opt}}}
\newcommand{\smin}{s_{\mathrm{min}}}
\newcommand{\smax}{s_{\mathrm{max}}}
\newcommand{\wmax}{w_{\mathrm{max}}}
\newtheorem{theorem}{Theorem}
\newtheorem{definition}{Definition}
\newtheorem{lemma}{Lemma}
\newcommand{\Hell}{\tilde{z}}
\newcommand{\OneMax}{\textsc{OneMax}\xspace}
\newcommand{\onemax}{\OneMax}
\newcommand{\ie}{i.\,e.\xspace}
\newcommand{\eg}{e.\,g.\xspace}
\newcommand{\card}[1]{|#1|}
\renewcommand{\epsilon}{\varepsilon}
\DeclareMathOperator{\Prob}{Pr}
\newenvironment{proofof}[1]%
{\begin{trivlist}\item\textbf{Proof of #1.}}%
{\hspace*{\fill}\end{trivlist}}
\title{Improved Runtime Results for Simple Randomised Search Heuristics on Linear Functions with a Uniform Constraint}
\author{
Frank Neumann
\\Optimisation and Logistics
\\The University of Adelaide, Australia
\And
Mojgan Pourhassan
\\Optimisation and Logistics
\\The University of Adelaide, Australia\
\And
Carsten Witt
\\DTU Compute
\\Technical University of Denmark
}
\begin{document}

\maketitle

\begin{abstract}
In the  last decade remarkable progress has been made in development of suitable proof techniques for 
analysing randomised search heuristics. The theoretical investigation of these algorithms on classes of 
functions is essential to the understanding of the underlying stochastic process. Linear functions have 
been traditionally studied in this area resulting in tight bounds on the expected optimisation time 
of simple randomised search algorithms for this class of problems. Recently, the constrained version 
of this problem has gained attention and some theoretical results have also been obtained on this 
class of problems. In this paper we study the class of linear functions under uniform constraint 
and investigate the expected optimisation time of Randomised Local Search (RLS) and a simple evolutionary 
algorithm called \ea. We prove a tight bound of $\Theta(n^2)$ for RLS and improve the previously 
best known upper bound of \ea from $O(n^2 \log (Bw_{\max}))$ to $O(n^2\log B)$ in expectation and 
to $O(n^2 \log n)$  with high probability, where $w_{\max}$ and $B$ are the maximum weight of the 
linear objective function and the bound of the uniform constraint, respectively. Also, we obtain a tight bound of 
$O(n^2)$ for the \ea on a special class of instances. We complement our theoretical studies by experimental investigations 
that consider different values of $B$ and also higher mutation rates that reflect the fact that $2$-bit flips are crucial for dealing with the uniform constraint.
\end{abstract}

\keywords{randomised search heuristics, \ea, linear functions, constraints, runtime analysis}

\section{Introduction}
Randomised search heuristics, such as evolutionary computing techniques and randomised local search 
algorithms have been widely used in real world applications that involve optimisation. Over the last  
decade a lot of progress has been obtained in understanding the runtime behaviour of 
these algorithms, which give us insights on the underlying stochastic process, particularly for classes of optimisation problems. 

One of the classes of problems which has been studied for a simple evolutionary algorithm, called \ea, is the 
 class of linear pseudo-boolean functions~\cite{djwea02, He2004, JJ08, Jagerskupper11, DJWLinearRevisited, WittCPC13}. 
The problem is to optimise a linear function of $n$ Boolean variables.  Droste, Jansen and Wegener \cite{djwea02}
 were the first to obtain
an upper bound of $O(n\log n)$ 
for the expected optimisation time of the \ea on this problem, where 
the presented proof is highly technical. Later, using the analytic framework of drift analysis~\cite{Hajek1982}, 
He and Yao presented a simplified proof for the same upper bound of $O(n \log n)$~\cite{ He2004}. Another 
major improvement was made in~\cite{JJ08, Jagerskupper11}, where the first precise analysis 
is presented for the optimisation time of the problem. Using a  framework for the analysis of multiplicative 
drift~\cite{DJWMultiplicativeDrift},  Doerr, Johannsen and Winzen  improved the precise upper bound result 
to the bound $(1.39 + o(1))en \ln n$~\cite{DJWLinearRevisited}.  \citet{WittCPC13} 
finally improved this bound  to $en \ln n + O(n)$, using adaptive 
drift~\cite{DoerrGoldbergLinear, DoerrGoldbergAdaptive} based on a novel potential function.

The mentioned results consider the problem without any constraints. However, the class of 
linear pseudo-boolean functions has also been recently studied under linear constraints~\cite{FRIEDRICH2018}.  
The problem of optimising a linear function under a linear constraint means that the search 
space is split by a hyperplane and only the points in one of the half spaces are considered 
feasible. This problem is equivalent to the well-known knapsack problem in the Boolean domain.
One of the linear constraints that is studied in~\cite{FRIEDRICH2018}, is the uniform constraint, 
in which the constraint is given by OneMax; hence, restricting the number of $1$-bits in the string. 
Denoting the bound on the number of $1$-bits by $B$, the authors of that work have conjectured a 
general upper bound of $O(n^2)$ for all linear functions, independently of $B$. However, their analysis only proves
a general upper bound of $O(n^2 \log(Bw_{\max}))$  for this setting, where $w_{\max}$ is the largest weight in the objective function. 

In this paper, we study two randomised search heuristics, RLS and \ea, and 
analyse the expected optimisation time of these algorithms on linear functions problem under a 
uniform constraint. We first prove that an upper bound of $O(n^2)$ holds for RLS and then we improve 
the current upper bound of $O(n^2 \log(Bw_{\max}))$ to $O(n^2\log^+ B)$ for the \ea, where $\log^+(x)\coloneqq \min\{1,\log x\}$. 
In the special case that the $B$ smallest weights of the linear function are identical, the bound for the \ea 
becomes $O(n^2)$. 
Together with the lower bound 
$\Omega(n^2)$ due to \cite{FRIEDRICH2018}, we have obtained asymptotically tight results.

The problem of optimising a linear function under a uniform constraint can be seen as 
a simplification of the classical minimum spanning tree problem. The minimum spanning tree problem 
has been studied quite extensively in the area of randomised search heuristics. Neumann 
and Wegener~\cite{NeumannWegenerTCS07} have shown an upper bound of $O(m^2(\log n + \log w_{\max}))$, 
where $n$ is the number of nodes, $m$ is the number of edges and $w_{\max}$ is the largest edge weight 
of the given input graph. These results have been improved for special classes of 
graphs~\cite{WittGECCO14} and edge weights~\cite{ReichelSkutellaFOGA09}. However, it still remains an open 
question whether an upper bound of $O(m^2 \log n)$ can be achieved for the \ea on any graph.

The investigations in this paper are on a simpler problem and do not have direct implications for instances 
of the minimum spanning tree problem, but we are hopeful that the provided techniques 
and insights will be helpful to achieve an upper bound of $O(m^2 \log n)$ of the \ea on that problem. 
Many other analyses of evolutionary algorithms also contain the largest input 
weight in the obtained runtime bounds (\eg, \cite{ReichelSkutella10,NeumannReichelSkutella11}) and getting strong results independent of this parameter 
poses a significant challenge for many problems where input weights might be exponential. This includes many 
results using multiplicative drift analysis when dealing with exponentially large weights and using the given fitness 
functions as the potential/drift function~\cite{DJWMultiplicativeAlgorithmica}.

This paper is structured as follows. Section~\ref{sec:prel} includes the definition of the 
investigated algorithms  and the analytical tools that we are going to use in the paper. 
Section~\ref{sec:scenario} explains the studied problem, as well as the notations that we use in this paper. In 
Sections~\ref{sec:RLS} and~\ref{sec:ea}, respectively, we present the analysis for RLS and the \ea. We report on our experimental results in Section~\ref{sec:experiments} and finish in Section~\ref{sec:concl} with some conclusions.

\paragraph{Extensions to the conference version:} The conference version \cite{NPWGECCO19} of this paper 
only proved a runtime bound $O(n^2\log B)$ with respect to the \ea. For a special class of functions where the 
$B$ smallest weights are identical we improved this bound 
towards the asymptotically tight result $O(n^2)$ via a refined drift analysis.
 Moreover, we have the new Section~\ref{sec:experiments} describing 
empirical studies 
of the impact of the bound~$B$ and the mutation rate.

\section{Preliminaries}
\label{sec:prel}
We consider two classical randomised search heuristics called RLS and \ea, see   
Algorithms~\ref{alg:oneoneea} and~\ref{alg:rls}, which are intensively 
studied in the theory of randomised search heuristics \cite{AugerDoerrBook,JansenBook}. 
The \ea is a globally searching hill-climber, whereas RLS samples from a neighbourhood of size at most~$2$. Note that for RLS, steps 
that change two bits are crucial when the current search point has a tight constraint but is not the optimum yet. 

The runtime 
 (synonymously, optimisation time) of the algorithms is defined as
 the random number of iterations until an optimal search point has 
been sampled. Denoting this number by a random variable $T$, in this paper we analyse the expected value 
of~$T$, $\E{T}$, for both studied algorithms.

\begin{algorithm}[t]
\caption{\ea}
\label{alg:oneoneea}
\begin{algorithmic}
\STATE $t:=0$.
 \STATE Choose uniformly at random $x_0 \in \{0,1\}^n$.
 \REPEAT 
  \STATE Create $x'$ by flipping each bit in $x_t$ independently with probability $1/n$.
  \STATE $x_{t+1}:=x'$ if $f(x') \le f(x_t)$, and $x_{t+1}:=x_t$ otherwise. 
  \STATE $t:=t+1$.
 \UNTIL{some stopping criterion is fulfilled.}
\end{algorithmic}
\end{algorithm}

\begin{algorithm}[t]
\caption{Randomised Local Search (RLS)}
\label{alg:rls}
\begin{algorithmic}
\STATE $t:=0$.
 \STATE Choose uniformly at random $x_0 \in \{0,1\}^n$.
 \REPEAT 
  \STATE Choose $b\in\{1,2\}$ uniformly. Create $x'$ by flipping $b$ bits in~$x_t$ chosen uniformly at random.
  \STATE $x_{t+1}:=x'$ if $f(x') \le f(x_t)$, and $x_{t+1}:=x_t$ otherwise. 
  \STATE $t:=t+1$.
 \UNTIL{some stopping criterion is fulfilled.}
\end{algorithmic}
\end{algorithm}

In our analysis for the \ea, we use two important drift theorems, 
which we list in this section in Theorem~\ref{theo:variableDrift} and~\ref{theo:multdrift-upper}. 
The variable drift theorem (Theorem~\ref{theo:variableDrift})
was independently proposed in \cite{MitavskiyVariable,AdaptiveDriftJohannsenPhD2010} and generalised in 
\cite{AdaptiveDriftRowe2012}. 
The multiplicative drift theorem (Theorem~\ref{theo:multdrift-upper}) is due to 
\citet{DJWMultiplicativeAlgorithmica} and was enhanced with tail bounds by
  \citet{DoerrGoldbergAdaptive}. Both theorems are formulated in a unified 
and slightly generalised manner here. The formulation in 
terms of an arbitrary stochastic process 
can also be found in \cite{LehreWittISAAC2014}. The adaptation of the multiplicative drift theorem 
to arbitrary positive $\smin$-values 
has first been stated in~\cite{DJWMultiplicativeAlgorithmica}.

\begin{theorem} [Variable Drift, cf.~\cite{AdaptiveDriftRowe2012, AdaptiveDriftJohannsenPhD2010}]
\label{theo:variableDrift}
Let $(X_t)_{t\ge 0}$, be a stochastic process, adapted to a filtration~$\mathcal{F}_t$, 
over some state space $S\subseteq \{0\}\cup [\smin,\smax]$, where 
$\smin>0$.
Furthermore, let $T\coloneqq\min\{t\mid X_t=0\}$ be the 
first hitting time of state~$0$. 
Suppose that there exists a monotonically increasing 
function $h\colon [\smin, \smax] \rightarrow \R^+$ such that $1/h$ is integrable, and for all $t<T$
\[\E{X_t-X_{t+1}\mid \mathcal{F}_t}\geq h(X_t)\text{.}\]
Then, 
\[\E{T\mid \mathcal{F}_0}\leq \frac{\smin}{h(\smin)}+\int_{\smin}^{X_0} \frac{1}{h(s)} \text{d}s.\]
\end{theorem}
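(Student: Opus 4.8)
The plan is to reduce the statement to the additive drift theorem by constructing a suitable potential function. Define $g\colon \{0\}\cup[\smin,\smax]\to[0,\infty)$ by $g(0)\coloneqq 0$ and $g(x)\coloneqq \smin/h(\smin)+\int_{\smin}^{x}(1/h(s))\,\mathrm{d}s$ for $x\in[\smin,\smax]$; integrability of $1/h$ guarantees that $g$ is well defined, and monotonicity of $h$ makes $g$ strictly increasing, so that $g(X_t)=0$ holds if and only if $X_t=0$. Consequently the hitting time $T$ of state~$0$ for $(X_t)$ coincides with the hitting time of~$0$ for the transformed process $(g(X_t))$, and it suffices to prove that $g(X_t)$ decreases by at least~$1$ in expectation per step.

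The key step is the pointwise estimate $g(X_t)-g(X_{t+1})\ge (X_t-X_{t+1})/h(X_t)$, valid for every $t<T$ and every realisation of $X_{t+1}$. First I would argue this when $X_{t+1}\ge\smin$: writing the difference as $\int_{X_{t+1}}^{X_t}(1/h(s))\,\mathrm{d}s$ and using that $1/h$ is non-increasing (because $h$ is increasing), the integrand is at least $1/h(X_t)$ on the relevant interval when $X_{t+1}\le X_t$ and at most $1/h(X_t)$ when $X_{t+1}>X_t$; in both directions the signed integral is bounded below by $(X_t-X_{t+1})/h(X_t)$. The remaining case $X_{t+1}=0$ needs separate care, since here $g(X_t)-g(X_{t+1})=\smin/h(\smin)+\int_{\smin}^{X_t}(1/h(s))\,\mathrm{d}s$; bounding the integral below by $(X_t-\smin)/h(X_t)$ and using $\smin/h(\smin)\ge \smin/h(X_t)$ (again by monotonicity of $h$ and $X_t\ge\smin$) recovers the same lower bound $X_t/h(X_t)=(X_t-X_{t+1})/h(X_t)$.

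With the pointwise inequality in hand, I would take conditional expectation given $\mathcal{F}_t$. Since $X_t$, and hence $h(X_t)$, is $\mathcal{F}_t$-measurable and $h(X_t)>0$, it factors out of the expectation, giving $\E{g(X_t)-g(X_{t+1})\mid\mathcal{F}_t}\ge \E{X_t-X_{t+1}\mid\mathcal{F}_t}/h(X_t)\ge h(X_t)/h(X_t)=1$ for all $t<T$, where the last inequality is exactly the variable-drift hypothesis. Applying the additive drift theorem to $(g(X_t))$ with drift bound~$1$ then yields $\E{T\mid\mathcal{F}_0}\le g(X_0)=\smin/h(\smin)+\int_{\smin}^{X_0}(1/h(s))\,\mathrm{d}s$, which is the claimed bound.

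I expect the main obstacle to be the case analysis in the pointwise estimate --- in particular correctly handling steps that increase the potential and the jump $X_{t+1}=0$ that crosses the gap $(0,\smin)$ in the state space --- together with making sure the monotonicity of $1/h$ is applied with the right sign in each case; the reduction to additive drift and the final expectation step are then routine.
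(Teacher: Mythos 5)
The paper does not prove this statement: Theorem~\ref{theo:variableDrift} is imported as a known tool, with the proof delegated to the cited references. Your argument is correct and is essentially the standard proof of the variable drift theorem found in those references --- reduce to additive drift via the potential $g(x)=\smin/h(\smin)+\int_{\smin}^{x}(1/h(s))\,\mathrm{d}s$, establish the pointwise estimate $g(X_t)-g(X_{t+1})\ge (X_t-X_{t+1})/h(X_t)$ by monotonicity of $1/h$ (including the jump to $0$ across the gap $(0,\smin)$), and take conditional expectations. Nothing is missing.
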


\begin{theorem} [Multiplicative Drift, cf. \cite{DJWMultiplicativeAlgorithmica,DoerrGoldbergAdaptive}] 
\label{theo:multdrift-upper}
Let $(X_t)_{t\ge 0}$, be a stochastic process, adapted to a filtration~$\mathcal{F}_t$, 
over some state space $S\subseteq \{0\}\cup [\smin,\smax]$, where 
$\smin>0$. Suppose that there exists a $\delta>0$ such that
for all $t\ge 0$
\[
\E{X_t-X_{t+1}\mid \mathcal{F}_t}\ge \delta X_t.
\]
\parbox{\textwidth}{%
Then it holds for the first hitting time 
$T:=\min\{t\mid X_t=0\}$ that 
\[
\E{T\mid \mathcal{F}_0} \le \frac{\ln(X_0/\smin)+1}{\delta}.
\]
Moreover, 
$\Prob(T> (\ln(X_0/\smin)+r)/\delta) \le e^{-r} $ for any $r>0$.}
\end{theorem}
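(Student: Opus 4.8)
The plan is to treat the two assertions separately, since the expectation bound follows almost immediately from the variable drift theorem already available, whereas the tail bound needs an independent exponential-decay argument.

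For the expectation bound I would simply invoke Theorem~\ref{theo:variableDrift} with the linear choice $h(x)\coloneqq\delta x$. This $h$ is monotonically increasing on $[\smin,\smax]$ because $\delta>0$, and $1/h(s)=1/(\delta s)$ is integrable there since $\smin>0$; the hypothesis of the variable drift theorem is then exactly the given condition $\E{X_t-X_{t+1}\mid\mathcal{F}_t}\ge\delta X_t=h(X_t)$, which holds for all $t<T$ as a special case of the assumed bound for all $t\ge 0$. The theorem would then yield
\[
\E{T\mid\mathcal{F}_0}\le\frac{\smin}{h(\smin)}+\int_{\smin}^{X_0}\frac{1}{h(s)}\,\text{d}s=\frac{1}{\delta}+\frac{1}{\delta}\bigl(\ln X_0-\ln\smin\bigr)=\frac{\ln(X_0/\smin)+1}{\delta},
\]
which is precisely the claimed inequality, so the first part requires no new work beyond verifying these hypotheses.

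For the tail bound the key observation is that the drift condition rewrites as $\E{X_{t+1}\mid\mathcal{F}_t}\le(1-\delta)X_t$. Assuming \wlo $\delta\le 1$ (a larger $\delta$ is incompatible with the nonnegativity of the process unless it already sits at~$0$), I would apply the tower property and iterate this inequality to obtain $\E{X_t\mid\mathcal{F}_0}\le(1-\delta)^t X_0\le e^{-\delta t}X_0$, using $1-\delta\le e^{-\delta}$. Because the state space leaves a gap below $\smin$, the event $\{T>t\}$ that the process has not yet reached $0$ forces $X_t\ge\smin$; Markov's inequality then gives
\[
\Prob(T>t)\le\Prob(X_t\ge\smin)\le\frac{\E{X_t\mid\mathcal{F}_0}}{\smin}\le\frac{X_0}{\smin}\,e^{-\delta t}.
\]
Substituting $t=(\ln(X_0/\smin)+r)/\delta$ makes the right-hand side collapse to $e^{-r}$, which completes the argument.

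The main obstacle is really the only delicate point: the bookkeeping around the state space and the stopping time. One must justify the inclusion $\{T>t\}\subseteq\{X_t\ge\smin\}$ from the assumption $S\subseteq\{0\}\cup[\smin,\smax]$, and confirm that the chaining $\E{X_{t+1}\mid\mathcal{F}_0}\le(1-\delta)\E{X_t\mid\mathcal{F}_0}$ remains valid once the process is absorbed at $0$ (where the conditional inequality holds with both sides equal to zero). The degenerate regime $\delta>1$ should be dispatched explicitly rather than through $(1-\delta)^t$, whose sign would otherwise oscillate.
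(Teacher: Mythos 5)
This theorem is stated in the paper as an imported tool, cited from the literature (Doerr--Johannsen--Winzen for the expectation bound, Doerr--Goldberg for the tail bound); the paper gives no proof of its own, so there is nothing internal to compare against. Your argument is correct and is essentially the standard proof from those sources. Deriving the expectation bound by specialising the variable drift theorem (Theorem~\ref{theo:variableDrift}) to $h(x)=\delta x$ is a clean choice that stays entirely within the paper's own toolbox, and the computation $\smin/h(\smin)+\int_{\smin}^{X_0}(\delta s)^{-1}\,\mathrm{d}s=(\ln(X_0/\smin)+1)/\delta$ is exactly right. The tail bound via the tower property, $\E{X_t\mid\mathcal{F}_0}\le(1-\delta)^tX_0\le e^{-\delta t}X_0$, the inclusion $\{T>t\}\subseteq\{X_t\ge\smin\}$ forced by the gap in the state space, and Markov's inequality is the Doerr--Goldberg argument verbatim; your explicit treatment of the degenerate case $\delta>1$ (where the drift hypothesis forces $X_0=0$) and of absorption at $0$ closes the only loose ends. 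The single cosmetic point worth noting is that $t=(\ln(X_0/\smin)+r)/\delta$ need not be an integer: since $T$ is integer-valued one really bounds $\Prob(T>\lfloor t\rfloor)\le (X_0/\smin)e^{-\delta\lfloor t\rfloor}$, which costs a factor of at most $e^{\delta}$ unless one rounds $t$ up; this imprecision is inherited from the theorem statement itself and is standard in the literature, so it does not constitute a gap in your reasoning.
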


Finally, in our analysis, we will use the following simple lemma dealing with convexity.

\begin{lemma}
\label{lem:convexity}
Let $a_1,\dots,a_k\ge 0$ and $C>1$. Then 
\[
(a_1+\dots + a_k)^{C} \le k^{C-1}\left((a_1)^{C} + \dots + (a_k)^{C}\right).
\]
\end{lemma}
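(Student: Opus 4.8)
The plan is to deduce the statement from the convexity of the map $\phi(t) = t^{C}$ on $[0,\infty)$ together with Jensen's inequality. Since $C > 1$, the function $\phi$ is convex on the nonnegative half-line: its derivative $\phi'(t) = C\,t^{C-1}$ is monotonically increasing on $[0,\infty)$ (equivalently $\phi''(t) = C(C-1)t^{C-2} \ge 0$ for $t > 0$), which already gives convexity including the boundary point $0$.

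First I would dispose of the trivial cases. If $k = 1$ the claim reads $(a_1)^{C} \le (a_1)^{C}$ because $k^{C-1} = 1$, and if all $a_i = 0$ then both sides vanish; so we may assume $k \ge 2$ with not all $a_i$ equal to zero. Applying Jensen's inequality to the convex function $\phi$ with uniform weights $1/k$ at the points $a_1,\dots,a_k$ gives
\[
\left(\frac{a_1 + \dots + a_k}{k}\right)^{C} \le \frac{(a_1)^{C} + \dots + (a_k)^{C}}{k}.
\]
Multiplying both sides by $k^{C}$ and using $k^{C}/k = k^{C-1}$ yields exactly
\[
(a_1 + \dots + a_k)^{C} \le k^{C-1}\left((a_1)^{C} + \dots + (a_k)^{C}\right),
\]
which is the claimed inequality.

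The argument involves no real obstacle; the only point deserving a line of justification is the convexity of $\phi$ on $[0,\infty)$ in the regime $1 < C < 2$, where $\phi''$ is unbounded near~$0$ but $\phi'$ is still increasing, so convexity persists up to and including~$0$. As an alternative route that avoids Jensen entirely, one could invoke the power-mean inequality directly, or apply H\"older's inequality with conjugate exponents $C$ and $C/(C-1)$ to the vectors $(a_1,\dots,a_k)$ and $(1,\dots,1)$, both of which reproduce the same bound with the factor $k^{C-1}$.
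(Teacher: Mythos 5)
Your proof is correct and follows essentially the same route as the paper: both apply Jensen's inequality to the convex function $t\mapsto t^{C}$ with uniform weights $1/k$ and then multiply by $k^{C}$. The extra remarks on convexity at $0$ and the alternative via H\"older are fine but not needed.
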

\begin{proof}
We write
\[
(a_1+\dots + a_k)^{C} = k^C \left(\frac{a_1}{k}+\dots+\frac{a_k}{k}\right)^C,
\]
and interpret the expression in parentheses as a linear combination of the $k$ numbers 
with coefficient $1/k$ each. Applying Jensen's inequality, we have 
\[
\left(\frac{a_1}{k}+\dots+\frac{a_k}{k}\right)^C \le \frac{(a_1)^C}{k} + \dots + \frac{(a_k)^C}{k},
\]
which, after multiplying with $k^C$, gives the desired result.
\qed\end{proof}

\paragraph{Notation} Throughout this paper, for natural numbers $n$ we write 
$[n]\coloneqq \{1,\dots,n\}$. 

\section{Scenario}
\label{sec:scenario}
In this paper we analyse the expected optimisation time of RLS and the \ea and consider an optimisation problem 
with a linear objective function under a uniform constraint. In contrast to earlier work in this area 
\cite{FRIEDRICH2018}, we assume that the objective function has to be \emph{minimised} since this perspective
 more naturally coincides with the minimisation of the distance to the target~$0$ that is implicit in the drift 
theorems (Theorems~\ref{theo:variableDrift} and~\ref{theo:multdrift-upper}). The upper bounds on the 
optimisation time obtained for RLS in Section~\ref{sec:RLS} and for the \ea in Section~\ref{sec:ea}, respectively, 
would equally hold if we adopted maximisation in the same way as in the previous work. See Section~\ref{sec:min-vs-max} for more discussion about assuming a minimisation problem or a maximisation problem.

Formally, throughout this paper, 
we consider the search space $\{0,1\}^n$ of all bit strings $x=x_n x_{n-1}\cdots x_1$, and the goal is to \emph{minimise} the objective function of
\[
f_{\mathrm{obj}} (x)=\sum_{i=1}^n w_i x_i,
\]
where $w_n\ge \dots\ge w_1$ are positive real weights, 
under the uniform constraint 
\[x_1+\dots+x_n\ge B\]
for some $B\in\{1,\dots,n\}$. We are excluding $B=0$, as it is equivalent to having no constraints.
Note that we follow common conventions in the analysis of linear functions 
\cite{DJWLinearRevisited, WittCPC13} by writing down 
search points in the order $x_n\dots x_1$, \ie, most significant bit first. 
Therefore, an index $i$ is called \emph{left} of index~$j\neq i$ 
if $i>j$ and \emph{right} of~$j$ otherwise.

A search point is \emph{optimal} if it minimises $f_{\mathrm{obj}}$ and is placed 
in the feasible region, \ie, the part of the search space where the constraint is satisfied. 
Moreover, we say that a search point is \emph{tight} (in the constraint) if
$x_1+\dots+x_n = B$.

In Algorithms~\ref{alg:oneoneea} and~\ref{alg:rls}, $x$ denotes the best search point 
found so far, and $x'$ is the new offspring, which replaces $x$  if it is at least 
as good as it with respect to a fitness function $f$ that we define as follows.  We 
aim to handle the  constraint of the problem  by setting a penalty for the violation. Therefore, we define the fitness function below, to be used by the algorithms.
\[f(x)=f_{\mathrm{obj}}(x)+ \max\{0, (B-b(x))\}\cdot (n\wmax +1),\]
where  $\wmax = w_n$ is the maximal weight, and $b(x)=\sum_{i=1}^n x_i$ 
is the number of ones in the bit string $x$, which we also refer to as the $b$\nobreakdash-value of $x$. For feasible search points we have 
$b(x)\geq B$, which implies that $\max\{0, (B-b(x))\}=0$. Therefore, the 
penalty of $(B-b(x))\cdot (n\wmax +1)$ is applied to search points that are infeasible, 
making the value of this fitness function larger than that of any feasible search points. 
Note that with this definition of the fitness function, the search in infeasible region is also guided to the 
feasible region, because as the extent of the constraint violation is reduced the penalty also decreases.

We first find a tight bound on the expected optimisation time of RLS on this problem in Section~\ref{sec:RLS}, 
and then focus on the challenging analysis of the \ea in the rest of the paper. Lemma~\ref{lem:RLSFeasible}, which 
is presented in Section~\ref{sec:RLS} holds for the \ea as well as RLS, 
and is used in the analysis of both algorithms (Section~\ref{sec:RLS} and Section~\ref{sec:ea}).

\section{Analysis of RLS}
\label{sec:RLS}
In Theorem~\ref{theo:RLS}, we prove that RLS (Algorithm~\ref{alg:rls}) optimises 
the linear function problem under a uniform constraint in expected time $O(n^2)$. In Theorem~\ref{theo:RLSLowerBound} we also prove that this bound is tight. 

We start with the following lemma, which proves that a feasible search point is sampled by the studied algorithm in 
$O(n\log (n/(n-B)))$. This lemma holds for the \ea as well, and is also used 
in the analysis of Section~\ref{sec:ea}. The proof of this lemma is similar to the 
proof of Lemma~7 in~\cite{FRIEDRICH2018} in which a maximisation problem for a linear function 
under uniform constraint is considered. Here we  adapt the proof to match the minimisation problem.

\begin{lemma}
\label{lem:RLSFeasible}
Starting with an arbitrary initial solution, the expected time until RLS or the \ea obtain a feasible solution is $O(n \log (n/(n-B)))$.
\end{lemma}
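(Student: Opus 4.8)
The plan is to track the number of ones $b(x_t)=\sum_i (x_t)_i$ and to show that, as long as the current search point is infeasible, this quantity can only increase; reaching feasibility is then exactly the event that $b$ climbs from its initial value to $B$. The key structural observation is that while $b(x_t)<B$ the penalty term dominates the fitness. Indeed, $f_{\mathrm{obj}}$ takes values in $[0,n\wmax]$, so any mutation changes it by at most $|\Delta f_{\mathrm{obj}}|\le n\wmax$, which is strictly less than the penalty coefficient $n\wmax+1$. Hence a step that decreases $b$ (while infeasible) raises the penalty by at least $n\wmax+1>|\Delta f_{\mathrm{obj}}|$ and is rejected, whereas a step that increases $b$ — including one that jumps straight from infeasible to feasible, reducing the penalty by at least $n\wmax+1$ — is accepted. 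I would verify both directions by this elementary estimate, concluding that $b(x_t)$ is non-decreasing until a feasible point is first obtained (and that the accepting step does make $x_{t+1}$ feasible).

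Next I would lower-bound the one-step probability $p_k$ of strictly increasing $b$ from a current value $b(x_t)=k<B$, at which point there are $n-k$ zero-bits. For RLS it suffices to select the $1$-bit-flip mode (probability $1/2$) and flip one of the $n-k$ zeros, giving $p_k\ge \tfrac12\cdot\tfrac{n-k}{n}$. For the \ea it suffices to flip exactly one of the $n-k$ zeros and no other bit, giving $p_k\ge (n-k)\tfrac1n(1-1/n)^{n-1}\ge \tfrac{n-k}{en}$. By the previous paragraph each such event yields an accepted, $b$-increasing step.

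With monotonicity and these bounds in hand, I would conclude via the variable drift theorem (Theorem~\ref{theo:variableDrift}) applied to $X_t\coloneqq B-b(x_t)$, whose state space is $\{0,1,\dots,B\}\subseteq\{0\}\cup[1,B]$ and whose first hitting time of $0$ is the time to reach feasibility. Writing $k=B-X_t$ so that $n-k=n-B+X_t$, rejection of $b$-decreasing steps makes the increment $X_t-X_{t+1}=b(x_{t+1})-b(x_t)$ a nonnegative integer, so $\E{X_t-X_{t+1}\mid\mathcal F_t}\ge p_k\ge h(X_t)$ for the monotonically increasing $h(s)=(n-B+s)/(cn)$, with $c=2$ for RLS and $c=e$ for the \ea. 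Substituting $\smin=1$, $\smax=B$ and $X_0\le B$ into the theorem gives
\[
\E{T\mid\mathcal F_0}\le \frac{cn}{n-B+1}+cn\ln\!\left(\frac{n}{n-B+1}\right)=O\!\left(n\log\frac{n}{n-B}\right),
\]
which is the same as the harmonic-sum bound $cn\sum_{k=0}^{B-1}1/(n-k)=cn(H_n-H_{n-B})$ coming from the fitness-level viewpoint.

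I expect the only genuinely delicate point to be the penalty-dominance argument that makes $b$ monotone: one must check that $n\wmax+1$ beats every possible objective change in both the $b$-increasing and $b$-decreasing directions, including the boundary transition that turns an infeasible point directly into a feasible one. The remaining steps are a routine drift computation. The sole edge case is $B=n$ (where $n-B=0$), which I would treat separately by noting the sum reduces to $cn H_n=O(n\log n)$, consistent with interpreting $n/(n-B)$ as $n$ in the stated bound.
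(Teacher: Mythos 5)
Your proposal is correct and follows essentially the same route as the paper: both arguments observe that the penalty makes the number of ones $b(x_t)$ monotone non-decreasing in the infeasible region, lower-bound the probability of flipping a single zero-bit by $(n-b)/(2n)$ for RLS and $(n-b)/(en)$ for the \ea, and conclude by a drift theorem. The only (inessential) difference is that you apply variable drift to $B-b(x_t)$ with $\smin=1$, whereas the paper applies multiplicative drift to the potential $n-b(x_t)$ with $\smin=n-B+1$; both yield the same $O(n\log(n/(n-B)))$ bound, and your explicit penalty-dominance check and the $B=n$ edge case are fine.
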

\begin{proof}
Recall that we denote by $b(x)$ the number of 1-bits in a search point $x$. Due to the 
definition of the fitness function $f$, in the infeasible region, a search 
point $x$ with a larger $b(x)$ is always preferred to a 
search point with a smaller $b$\nobreakdash-value. Therefore, the problem can be 
seen as maximising $b(x)$ until reaching $b(x)\geq B$, where the 
initial solution may have a $b$\nobreakdash-value of 0. We consider the potential function 
\begin{equation*}
  g(x)=\begin{cases}
    n-b(x), & \text{if $b(x)<B$},\\
    0, & \text{otherwise},
  \end{cases}
\end{equation*}
for which the initial value is at most $n$ and the minimum value before 
reaching 0 is $n-B+1$. The value of this function is never increased during the process of RLS 
or the \ea as larger $b$\nobreakdash-values are always preferred to smaller $b$\nobreakdash-values 
before reaching $g(x)=0$.  We find the drift on the value of $g(x_t)$ for RLS, where $x_t$ 
is the search point of the algorithm at step $t$, as
\[\E{g(x_{t})-g(x_{t+1})\mid g(x_t);g(x_t)>0}\geq \frac{n-b(x_t)}{2n}= \frac{g(x_{t})}{2n}\]
since RLS performs a 1-bit flip with probability $1/2$ and flips a 0-bit with 
probability $(n-b(x_t))/n$, improving $g$ by 1. A similar drift of 
\[\E{g(x_{t})-g(x_{t+1})\mid g(x_t);g(x_t)>0}\geq  \frac{g(x_{t})}{en}\]
is obtained for the \ea, in which the probability of flipping one 0-bit and no other bits is 
$\frac{n-b(x_t)}{n}\cdot (1-\frac{1}{n})^{n-1}\geq \frac{g(x_{t})}{en}$.

Using the multiplicative drift theorem (Theorem~\ref{theo:multdrift-upper}) with  
$\delta= 1/en$, $X_0\leq n$ and $\smin= n-B+1$ we find that the expected time until reaching 
a feasible solution is upper bounded by 
\[\frac{\ln(n/(n-B+1))+1}{1/(en)} = O\left(n\log \left(\frac{n}{n-B}\right)\right).\]
\qed
\end{proof}

\begin{theorem}
\label{theo:RLS}
Starting with an arbitrary initial solution, the expected optimisation time of \rls on linear functions with a uniform constraint is $O(n^2)$.
\end{theorem}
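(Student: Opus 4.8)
The plan is to start from the structure of the optimum. Since the weights are positive and $w_1\le\dots\le w_n$, any minimiser of $f_{\mathrm{obj}}$ under the constraint is \emph{tight} (has exactly $B$ ones) and places those ones on $B$ positions of smallest weight. Writing $\theta\coloneqq w_B$, I would partition the indices into a light set $L=\{i:w_i<\theta\}$, a medium set $M=\{i:w_i=\theta\}$ and a heavy set $H=\{i:w_i>\theta\}$; then a search point with one-set $P$ is optimal iff it is tight, satisfies $L\subseteq P$ and $P\cap H=\emptyset$. I would decompose the run into three phases: (i) reaching feasibility, which is handled directly by Lemma~\ref{lem:RLSFeasible} at cost $O(n\log(n/(n-B)))=O(n\log n)$; (ii) reducing the number of ones down to exactly~$B$; and (iii) rearranging the ones, while tight, into an optimal configuration.

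For phase (ii) I would observe that while $b(x)>B$ a one-bit flip removing any one is always accepted (it lowers $f_{\mathrm{obj}}$ and keeps feasibility), so from $b(x)=j$ the $b$-value drops with probability at least $(1/2)(j/n)$; summing the waiting times $2n/j$ over $j$ from $B+1$ to $b_0\le n$ gives $O(n\log(n/B))=O(n\log n)$. Crucially I would check that once $x$ is tight it remains tight forever, since any move increasing $b(x)$ adds positive weight and is rejected while any single removal violates the constraint; hence phases (ii) and (iii) are genuinely sequential and can be bounded separately.

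The heart of the argument is phase (iii), where only $2$-bit exchanges (one $1$ off, one $0$ on) can be accepted. Here I would use the potential $a'(x)\coloneqq |H\cap P|+|L\setminus P|$, the number of heavy ones plus light holes, which vanishes exactly at an optimum and, as I would verify case by case, never increases under an accepted exchange. The key quantitative claim is that whenever $a'(x)=k>0$ the number of \emph{distinct} accepted exchanges that strictly decrease $a'$ is $\Omega(k^2)$: counting heavy-one$\to$light-hole, heavy-one$\to$medium-hole and medium-one$\to$light-hole moves, and using the identities relating the numbers of medium ones and holes to $p=|H\cap P|$, $q=|L\setminus P|$ and the slack $|L|+|M|-B\ge 0$, one obtains at least $\tfrac14 k^2$ such moves. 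Each has probability $\tfrac12/\binom n2$, so the drift of $a'$ is at least $k^2/(4n^2)$; feeding $h(s)=s^2/(4n^2)$ into the variable drift theorem (Theorem~\ref{theo:variableDrift}) yields $4n^2+\int_1^n 4n^2/s^2\,\mathrm ds=O(n^2)$, where the convergence of $\int s^{-2}$ is precisely what avoids a spurious logarithmic factor. Summing the three phases gives the claimed $O(n^2)$.

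The main obstacle I anticipate is the presence of equal weights. Without ties one simply moves a heavy one into a light hole and $a'$ drops with the clean count $pq=k^2/4$; but ties across the threshold $\theta$ permit neutral shuffles among equal-weight positions and even moves that would inflate the naive quantity ``ones right of position $B$''. Making $a'$ genuinely non-increasing while still guaranteeing $\Omega(k^2)$ improving exchanges — rather than only $\Omega(k)$, which would reintroduce a $\log n$ factor through the harmonic sum — is the delicate point, and it is exactly what forces the $L/M/H$ refinement together with the careful bookkeeping of medium ones and holes.
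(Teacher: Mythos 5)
Your proposal is correct and follows essentially the same three-phase route as the paper: feasibility via Lemma~\ref{lem:RLSFeasible}, reduction to a tight solution via a harmonic/multiplicative-drift sum, and then $\Omega(k^2)$ accepted $2$-bit exchanges for a non-increasing ``misplaced bits'' potential, summed as $\sum_k n^2/k^2=O(n^2)$. Your potential $|H\cap P|+|L\setminus P|$ differs from the paper's $\max\{0,r-|M\cap P|\}+|L\setminus P|$ only by a constant factor (they agree up to a factor of~$2$ for tight search points), and your three-way count of improving swaps indeed yields at least $\tfrac14 k^2$ of them, so the argument goes through.
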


\begin{proof}
Due to Lemma~\ref{lem:RLSFeasible}, RLS finds a feasible solution in expected 
time $O(n\log (n/(n-B)))$. Also, since all feasible solutions have smaller fitness values than 
infeasible solutions, the algorithm does not switch back to the infeasible region again. Moreover, 
note that once a feasible solution has been found for the first time, the number 
of ones in the solution cannot be increased. This is due to the fact that the penalty is $0$ 
and all $1$-bit flips flipping a $0$ increase the fitness. Also, all $2$-bit flips 
that increase the number of ones (flipping two zeros) increase the fitness as well.

We split the analysis of the algorithm after obtaining a feasible solution into two phases. 
In the first phase, the algorithm starts with a solution $x$ with $b(x)>B$ and obtains 
a solution with exactly $B$ $1$-bits ($b(x)=B$). Then the second phase starts, during which the number 
of $1$-bits of the solution does not change, because  both $1$-bit flips and $2$-bit flips that 
change the number of ones increase the fitness. If the first feasible solution that is obtained 
by the algorithm has $b(x)=B$, then we do not have a first phase.
We first analyse the expected time until the first phase ends, then we focus on the second phase.
 
In the first phase, the algorithm starts with a solution $x$ with $b(x)>B$. In this 
situation, as explained above, $b(x)$ does not increase. Moreover, a $1$-bit flip that flips a~$1$ to~$0$, 
which happens with probability $b(x)/(2n)$, is always accepted because it decreases the fitness, while 
not violating the constraint yet. By defining a potential function $g(x)$ as 
\begin{equation*}
  g(x)=\begin{cases}
    b(x), & \text{if $b(x)>B$},\\
    0, & \text{otherwise},
  \end{cases}
\end{equation*}
and using the multiplicative drift theorem with $\delta= 1/2n$, $X_0\leq n$ and $\smin= B+1$, we 
find the expected time of $O(n \log (n/B))$ until a solution with $g(x)=0$ is found, which implies $b(x)=B$. 
 
Now we analyse the second phase. Having obtained a solution with exactly $B$ ones, 
only $2$-bit flips flipping a zero and a one are accepted.
Let $r$ be the number of bits of weight $w_B$ among $w_B, \ldots, w_1$, \ie, $r=|\{i \mid w_i=w_B, 1 \leq i \leq B\}|$.
An optimal solution contains all weights of weight less than $w_B$ and exactly $r$ weights of weight $w_B$.

Let $x$ be the current solution and \[
s(x)=\max \{0, r-|\{i \mid w_i=w_B \wedge x_i=1\}|\}\]
 be the number of $1$-bits of weight $w_B$ missing in $x$. Furthermore, let 
\[t(x) = |\{i \mid w_i<w_B \wedge x_i=0\}|
\]
be the number of $1$-bits of weight less than $w_B$ missing in $x$.

We denote by 
\[k= s(x) + t(x)
\]
the number of weights that are missing in the weight profile of the current solution $x$ compared to an arbitrary optimal solution.



As there are exactly $B$ $1$-bits in the current solution $x$, it implies that there are exactly
\[
k= \{i \mid w_i>w_B \wedge x_i=1\} + \max\{0, |\{i \mid w_i=w_B \wedge x_i=1\}|-r\}
\]
weights chosen in $x$ that do not belong to an optimal weight profile. Note that for a given solution $x$,
\[r-|\{i \mid w_i=w_B \wedge x_i=1\}|
\]
is a fixed value which is greater than $0$ if $1$-bits of weight $w_B$ are missing and less than $0$ if there are too many $1$-bits of weight $w_B$.

This implies that there are at $k$ $1$-bits which can be swapped with an arbitrary $0$-bit of the  missing $k$ weights in order to reduce $k$. 
Hence, the probability of swapping a 1-bit with a 0-bit of the missing weights is at least
$\frac{k^2}{2n^2}$ and the expected waiting time for this event is bounded from above 
by $2n^2/k^2$. Since $k$ cannot increase, it suffices to sum up these expected waiting times 
following the idea of 
 fitness-based partitions \cite{WegenerMethods}. Hence, 
the expected time until reaching $k=0$  is
\[\sum_{k=1}^B (2n^2/k^2) = O(n^2),\]
which completes the proof.
\qed\end{proof}

We now show that the previous bound is asymptotically tight.

\begin{theorem}
\label{theo:RLSLowerBound}
There is a linear function $f$ and a bound $B$ such that, starting with a uniformly random initial solution, the 
expected optimisation time of RLS on $f$ under uniform constraint $B$ is $\Omega(n^2)$.
\end{theorem}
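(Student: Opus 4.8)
The plan is to exhibit a concrete two-block instance and show that, with constant probability, RLS is driven into a configuration from which a single, fixed $2$-bit swap is required, costing $\Omega(n^2)$ expected steps. Concretely, I would take $n$ even, set $B=n/2$, and let $w_i=1$ for $1\le i\le B$ and $w_i=2$ for $B< i\le n$; this respects $w_n\ge\dots\ge w_1>0$ and has the unique optimum $x^*$ with exactly the $B$ weight-$1$ bits set. I reuse the quantity $k$ from the proof of Theorem~\ref{theo:RLS}; here $t(x)=0$, so $k$ equals the number of missing weight-$1$ bits, which at a tight search point coincides with the number of set weight-$2$ bits. Since in the tight phase $k$ never increases and each accepted improving step lowers it by exactly $1$, reaching the optimum forces the process through every value $k=j,j-1,\dots,0$; in particular it visits $k=1$ unless the \emph{first} tight search point is already optimal. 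The proof then splits into part (A), a bottleneck bound showing that from any state with $k=1$ the remaining expected time is $\Omega(n^2)$, and part (B), a configuration bound showing that with probability $\Omega(1)$ the first tight search point has $k\ge 1$.

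For part (A), observe that when $k=1$ there is exactly one weight-$1$ bit equal to $0$ and exactly one weight-$2$ bit equal to $1$. As already argued for Theorem~\ref{theo:RLS}, $1$-bit flips and $2$-bit flips that change $b$ are rejected, and the \emph{only} accepted move reaching $k=0$ is the simultaneous $2$-bit flip of these two particular bits; every other accepted move (a sideways swap) keeps $k=1$. Hence, independently of the sideways relabelling, each step taken at $k=1$ reaches $k=0$ with probability exactly $\tfrac12\binom{n}{2}^{-1}=\tfrac{1}{n(n-1)}$, so the time from $k=1$ to $k=0$ is geometric with mean $n(n-1)=\Omega(n^2)$.

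Part (B) is the main obstacle, and the key idea is that for $B=n/2$ the feasibility phase is \emph{short}. Let $\tau$ be the first time $b(x_t)=B$ and let $Y_t$ be the number of weight-$1$ bits equal to $0$; at a tight point $k=Y_\tau$. I would establish three facts, each with at least constant probability and together with probability $\Omega(1)$: first, the initial search point satisfies $B\le b(x_0)\le B+\sqrt n$ and $Y_0=\Omega(n)$, which follows from the binomial law of the bits; second, while $b>B$ the value $b$ is non-increasing and drops by at least $1$ in each step with probability at least $\tfrac14$ (a $1$-bit flip hitting one of the $\ge B$ ones), so the phase lasts only $N=O(\sqrt n)$ iterations with high probability; third, in the feasible region no accepted move turns on two $0$-bits, so $Y$ decreases by at most $1$ per step. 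Combining these, $Y_\tau\ge Y_0-N=\Omega(n)-O(\sqrt n)\ge 1$, hence $k=Y_\tau\ge1$, and the process must traverse $k=1$.

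Finally, conditioning on the event of part (B) and invoking part (A) gives $\E{T}\ge \Omega(1)\cdot n(n-1)=\Omega(n^2)$, since from the first visit to $k=1$ the additional time is stochastically at least a geometric variable with mean $n(n-1)$. The delicate points I expect to handle carefully are the stochastic domination bounding $N$ (a negative-binomial tail estimate) and the remark that conditioning on the constant-probability starting event only inflates conditional probabilities by an $O(1)$ factor, thereby preserving the $1-e^{-\Omega(n)}$ concentration of both $Y_0$ and $N$.
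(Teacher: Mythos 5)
Your proof is correct, but it takes a genuinely different route from the paper's. The paper chooses $B=n/4$ with weights $1$ and $1+\epsilon$ and argues via the Hamming distance to the optimum: since RLS flips at most two bits per step, the process must pass through a point at Hamming distance $2$ or $3$ from the optimum, and a case analysis (tight versus non-tight, the latter handled by a $\log n$-step window argument showing that the tight point reached shortly afterwards is still non-optimal with probability $1-o(1)$) reduces everything to the $\Omega(n^2)$ waiting time for one specific $2$-bit swap. You instead choose $B=n/2$ with weights $1$ and $2$ and track the monotone quantity $k$ from the proof of Theorem~\ref{theo:RLS}: because $k$ decreases by exactly one per improving swap in the tight phase, the process must visit $k=1$, where your bottleneck bound is the same geometric-waiting-time observation with success probability exactly $\frac{1}{n(n-1)}$. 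Your genuinely new ingredient is part (B): taking $B=n/2$ makes the initial surplus $b(x_0)-B$ only $O(\sqrt n)$ with constant probability, so the pre-tight phase is too short (each of its $O(\sqrt n)$ steps repairs at most one of the $\Omega(n)$ missing weight-$1$ bits) for the first tight point to be optimal. This buys you a cleaner ``must traverse $k=1$'' structure and avoids the paper's somewhat delicate window argument; the price is that your favourable event holds only with probability $\Omega(1)$ rather than $1-o(1)$, which suffices for the expectation bound but is a weaker statement about typical runs. All the individual claims in your sketch (monotonicity of $b$ and of $k$, the $1/4$ drop probability, the at-most-one decrease of $Y$ per accepted step, and the exactness of the success probability at $k=1$ independently of sideways relabellings) check out.
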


\begin{proof}
The same lower bound is proved for the \ea in Theorem~10 of~\cite{FRIEDRICH2018}. Since RLS does not flip more than $2$~bits at 
each step, the proof of this theorem is simpler. We use a function $f$ that is similar to the function that is used in~\cite{FRIEDRICH2018} and is adapted for a minimisation problem. We define $f$ as
\[f(x)= \sum_{i=1}^{B} x_i+ \sum_{j=B+1}^n (1+\epsilon)x_j,\]
where $\epsilon$ is an arbitrary positive real number.
Since the weights that are assigned to the first $B$ bits are smaller than the weights of other bits, the optimal solution is selecting the first $B$ bits. We prove that with $B=n/4$, the expected optimisation time of RLS is lower bounded by $\Omega(n^2)$.

We denote the Hamming distance of a solution $x$ to the optimal solution by $d_H(x)$. By Chernoff bounds the initial solution has at least $n/3$ 1-bits 
with probability exponentially close to 1, which implies a Hamming distance of at least $n/12$ to the optimal solution. Since RLS can only decrease the Hamming distance 
by one or two at each step, in order to reach the optimal solution, a solution $x$ has to be obtained at some point such that $2\leq d_H(x)\leq 3$. We investigate the 
process based on the number of 1-bits of solution $x$, which we denote by $|x|_1$. Since the initial solution is feasible with probability exponentially 
close to~$1$, we either have $|x|_1=B$ or $|x|_1>B$.

If $|x|_1=B$,  then $d_H(x)=2$ and $x$ can only have one 0-bit among the first $B$ bits and 
one 1-bit among other bits. In this case only a swap on the two misplaced bits can improve the 
fitness, the probability of which to happen is at most $1/n^2$; hence, the waiting time is $\Omega(n^2)$ and the 
theorem follows.

If $|x|_1>B$, then flipping any of the 1-bits improves the fitness. Since there are more than $n/4$ 1-bits in the solution, 
the probability of decreasing the number of 1-bits is at least $1/8$ at each time step of RLS. Furthermore, the number of 
0-bits does not decrease by RLS due to the fitness function. Using a drift argument on $|x|_1-B$, we find that in 
expected constant time (at most $\frac{3}{1/8}$) a solution $x'$ is obtained such that $|x'|_1=B$. This implies 
that in a phase of $\log n$ steps, with probability $1-o(1)$ the solution $x'$ is obtained. 
If $x'$ is not optimal, 
then we have to swap at least two bits and the theorem follows as above. 
What remains is to show that $x'$ is not optimal with 
probability~$1-o(1)$.  
Since $d_H(x)\leq 3$, the probability of flipping a one-bit from~$x$ that is outside the first $B$ positions is at most 
$3/n$ at each step. Therefore, with probability at least $1-(1-(1-1/n)^{\log n})^3 = 1-o(1)$ 
at least one of these bits does not flip  in a 
phase of $\log n$ steps; hence, $x'$ is not the optimal solution with probability~$1-o(1)$, which completes the proof.
\qed\end{proof}

\section{Analysis of the \ea}
\label{sec:ea}
In this section we analyse the expected optimisation time of the \ea for 
linear functions under a uniform constraint. In the following subsection we present 
the statement of our results, and in the subsequent section we prove the statement.

For a linear function under uniform constraint of $B$, we aim to prove that the \ea finds an optimal 
solution in expected time $O(n^2 )$. 
Since Lemma~\ref{lem:RLSFeasible} proves that a feasible solution is obtained by 
the \ea in expected time $O(n\log(n/(n-B)))$ and this upper bound is asymptotically smaller than $O(n^2)$, 
we only focus on the analysis of the algorithm after finding a feasible solution. 
The main theorem that we prove in this section is stated below.

\label{sec:mainTheorem}
\begin{theorem}
\label{theo:upper}
Given an arbitrary linear function 
under a uniform constraint $x_1+\dots+x_n\ge B$ for $B\in\{1,\dots,n\}$,
the expected optimisation time of the  \ea 
is upper bounded by $O(n^2\log^+ B)$. Also, the time is  $O(n^2 \log n)$ with probability $1-O(n^{-c})$ 
for any constant~$c>0$.
\end{theorem}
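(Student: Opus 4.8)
The plan is to dispose of the infeasible and non-tight regimes first and then run a drift argument on a purely combinatorial potential that does not refer to the weights, which is precisely what lets the $\wmax$ factor of the previous bound disappear. By Lemma~\ref{lem:RLSFeasible} the \ea reaches a feasible search point in expected time $O(n\log(n/(n-B)))=O(n^2)$, and since every feasible point has smaller fitness than every infeasible one, the algorithm never returns to the infeasible region. I would then argue, in the spirit of the proof of Theorem~\ref{theo:RLS}, that the excess $b(x)-B$ is removed quickly: while $b(x)>B$, flipping out a single $1$-bit is always accepted and decreases $b(x)$, which gives multiplicative drift of rate $\Theta(1/n)$ on $b(x)-B$ (the number of $1$-bits available to flip out is at least $b(x)-B$), so a tight search point is reached within the time budget. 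The one new wrinkle compared with \rls is that the \ea can temporarily re-enter the non-tight region through a multi-bit flip; I would absorb this by noting such excursions are re-corrected again at rate $\Theta(1/n)$ and hence contribute only lower-order terms.

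For the tight endgame I would take as potential the number $k(x)$ of misplaced bits, defined exactly as $s(x)+t(x)$ in Theorem~\ref{theo:RLS}, so that $k(x)\in\{0,1,\dots,B\}$ and $k(x)=0$ if and only if $x$ is optimal. The fact that $k$ never exceeds $B$ is what will produce a $\log B$ rather than a $\log n$ or a $\log\wmax$ in the final bound. The positive part of the drift is inherited from the \rls analysis: any of the $k$ surplus $1$-bits may be swapped with any of the $k$ deficit $0$-bits, each such $2$-bit flip has probability at least $1/(en^2)$ and strictly decreases the fitness (a surplus bit has weight $\ge w_B$, a deficit bit weight $\le w_B$, and both being equal to $w_B$ is impossible), and these events are disjoint, so $\E{k_t-k_{t+1}\mid\mathcal F_t}\ge k_t^2/(en^2)\ge k_t/(en^2)$ \emph{before} accounting for steps that raise $k$.

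The main obstacle, and the genuine difference from the \rls analysis, is the negative drift. Unlike \rls, the \ea may flip many bits simultaneously, and an accepted, i.e.\ fitness non-increasing, flip can raise $k$ by trading several heavy $1$-bits for slightly lighter but still \emph{bad} $1$-bits. I would bound this contribution by summing over the number $\ell\ge 2$ of swapped pairs involved: a move that raises $k$ by $j$ must flip at least four bits, and more generally a number of bits that grows with $j$, while its acceptance forces the removed weights to dominate the added ones. The expected increase is then controlled by terms of the form $(\text{number of flipped bits})^{C}$ multiplied by a power of $1/n$, and this is exactly where Lemma~\ref{lem:convexity} is applied to pull the exponent inside the sum and show that the total negative drift is at most, say, half of $k_t/(en^2)$. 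This yields a clean multiplicative drift $\E{k_t-k_{t+1}\mid\mathcal F_t}\ge k_t/(2en^2)$ with $X_0\le B$ and $\smin=1$.

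Finally I would invoke the multiplicative drift theorem (Theorem~\ref{theo:multdrift-upper}) with $\delta=\Theta(1/n^2)$, $X_0\le B$ and $\smin=1$: its first statement gives $\E{T}\le(\ln B+1)\cdot O(n^2)=O(n^2\log^+ B)$, the $\log^+$ absorbing the case of constant~$B$, and adding the $O(n\log n)$ contributed by the feasibility and tightness phases preserves this order. For the high-probability statement I would use the tail bound of the same theorem with $r=(c+1)\ln n$, so that $\Prob(T>(\ln B+(c+1)\ln n)\cdot O(n^2))\le e^{-(c+1)\ln n}=O(n^{-c})$, and since $\ln B+(c+1)\ln n=O(\log n)$ this is the claimed $O(n^2\log n)$ bound. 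I expect the delicate step to be making the bound on the negative drift uniform across all weight profiles, since it must hold for arbitrary positive real weights; grouping the flips by how many bits they change and invoking Lemma~\ref{lem:convexity} is the device I would rely on to keep that bound weight-independent.
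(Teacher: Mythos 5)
Your proposal diverges fundamentally from the paper at the decisive step, and the divergence is fatal: the combinatorial potential $k(x)=s(x)+t(x)$ does not have positive drift under the \ea for arbitrary weight profiles, so the claimed bound ``negative drift at most half of $k_t/(en^2)$'' cannot be established. Concretely, consider weights $w_1=\dots=w_B=1$ and $w_{B+j}=3^j$ for $j\ge 1$, and a tight search point with $k$ bad one-bits. A mutation that flips one good one-bit to $0$, one bad one-bit of weight $3^i$ to $0$, and two bad zero-bits of weights $3^{i_1},3^{i_2}$ with $i_1,i_2<i$ to $1$ keeps the search point tight, strictly decreases $f$ (since $3^{i_1}+3^{i_2}<3^i$), is therefore accepted, and increases $k$ by one. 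There are $\Theta((B-k)\cdot k\cdot n^2)$ such four-bit flips, each of probability $\Theta(n^{-4})$, so the expected increase of $k$ per step is $\Theta((B-k)k/n^2)$, which for $k<B/2$ exceeds your positive drift $k^2/(en^2)$: the drift of $k$ is \emph{negative}. (A second leak of the same kind: accepted moves such as trading one heavy bad one-bit for two lighter bad one-bits push the search point into the non-tight region, where dropping any single good one-bit is accepted and raises $k$; your dismissal of these excursions as ``lower-order'' is not justified.) Lemma~\ref{lem:convexity} does not repair this — it is an inequality about powers of sums and has no bearing on which multi-bit flips are accepted.

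This is precisely the obstacle that forces the paper's actual construction. The paper does not use a weight-independent potential; it builds a weight-\emph{dependent} potential $g$ (Definition~\ref{def:potential}) whose coefficients follow the ratios of the true weights within each block but are capped at $\gamma_j=75B(j-B)^7$, so that (i) an accepted trade of a heavy bad one-bit for several lighter ones still releases enough potential to pay for the newly created bad bits and for any lost good bits (this is the content of Lemma~\ref{lem:drift-under-ai}, via the inequalities relating $g$-ratios to $w$-ratios and the block $\kappa(i)$), while (ii) the polynomial cap keeps $g=O(n^9)$, which is what removes the $\log w_{\max}$ factor when the drift theorems are applied. Your instinct that the potential must be polynomially bounded to avoid $\log w_{\max}$ is right, but boundedness alone is not enough: the potential must also be monotone with respect to every accepted mutation ``in expectation,'' and that property is exactly what $k(x)$ lacks and what the $\gamma$-capped, weight-tracking $g$ is engineered to provide. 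Your outer scaffolding (Lemma~\ref{lem:RLSFeasible} for feasibility, multiplicative drift for the tail bound with $r=c\ln n$) matches the paper and is fine, but the core of the argument is missing.
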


To prove Theorem~\ref{theo:upper}, we conduct an adaptive drift analysis, where 
the underlying potential function $g(x)$, to be minimised, depends on both the weights $(w_1,\dots,w_n)$ of 
the linear function 
 and the constraint value~$B$. 
The exact definition of the potential function is to some extent 
 inspired by the techniques developed in \cite{WittCPC13} and
further applied in \cite{DoerrPohlGECCO12} and \cite{DSWFOGA13}. However, 
as these papers are concerned with unconstrained problems only, 
additional effort has been made to transfer these techniques to our scenario.  

Once having defined the potential function, the idea 
is to analyse the potential $X_t:=g(x^{(t)})$ of the random search 
point~$x^{(t)}$ maintained
by the \ea on~$f$ at time~$t$.  We 
bound  its expected change $\E{X_t-X_{t+1} \mid X_t}$, \ie, the 
expected decrease of the potential function from time~$t$ to time~$t+1$. Then we 
use this bound  in the drift argument that proves the main theorem.

The following lemma (Lemma~\ref{lem:drift-statement}) states this bound
as well as a bound on the maximum value of the potential function, which will be required
in the drift theorems. We define $g(x)$ later in Definition~\ref{def:potential}, and prove 
the statements of Lemma~\ref{lem:drift-statement} for this function afterwards. We 
first bring the statement of this lemma and show how it can be used to prove Theorem~\ref{theo:upper}.

\begin{lemma}
\label{lem:drift-statement}
Considering a random variable $X_t= g(x^{(t)})$, where the function $g$ is given 
in Definition~\ref{def:potential} and $x^{(t)}$ is the random search point of the \ea at time $t$, for all time steps $t$ we have
\begin{enumerate}
\item
$\E{X_t-X_{t+1} \mid X_t} \ge \frac{0.025}{en^2} \min\left\{\frac{X_t^{8/7}  }{B^{2/7}}, X_t\right\}$,
\item
$1 \le X_t = O(n^9) $ if $x^{(t)}$ is not optimal.
\end{enumerate}
\end{lemma}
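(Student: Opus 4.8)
The plan is to prove the two statements separately, reading the precise shape of $g$ off Definition~\ref{def:potential}; statement~2 is essentially bookkeeping, while statement~1 carries the weight of the argument. For statement~2 I would write $g$ as a sum of at most~$n$ non-negative per-index contributions, one for every index that is \emph{misplaced} relative to an optimal weight profile---a selected index whose weight is too large, or an unselected index that an optimal solution would take. Definition~\ref{def:potential} caps the artificial coefficient attached to each index by a fixed polynomial in~$n$, so summing at most~$n$ contributions of size $O(n^8)$ gives $X_t=O(n^9)$; the bound $X_t\ge 1$ holds because a non-optimal point disagrees with every optimal profile in at least one index, and the coefficients are normalised so that a single disagreement already contributes at least~$1$. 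Both estimates drop out once the coefficients are written explicitly, so I expect no real difficulty here.

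For statement~1 I would condition on the current non-optimal search point $x=x^{(t)}$ and write the one-step drift $\E{X_t-X_{t+1}\mid X_t}$ as the expected $g$-decrease over accepted moves, separating a \emph{gain} from a few canonical improving moves from a \emph{loss} caused by all remaining accepted moves. When $b(x)>B$, deleting any surplus one decreases both $f$ and~$g$; when $b(x)=B$, the optimum differs from~$x$ in $k\le B$ misplaced ones together with the same number of vacant target indices, and swapping any misplaced one with any vacant target strictly decreases $f_{\mathrm{obj}}$---hence is accepted---and, because the coefficients of~$g$ respect the coarse partition into selected and unselected target indices, also decreases~$g$. Since each specific two-bit swap occurs with probability $\tfrac1{n^2}(1-\tfrac1n)^{n-2}\ge\tfrac1{en^2}$, summing the induced decreases over all $k^2$ such swaps contributes a gain of order $kX_t/(en^2)\ge X_t/(en^2)$ to the drift, which by itself already covers the multiplicative branch of the minimum.

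The loss is what forces the weaker, sub-multiplicative branch. The artificial coefficients of Definition~\ref{def:potential} are deliberately \emph{not} ordered exactly like the objective weights $w_n\ge\dots\ge w_1$---they follow the weights only coarsely, so as to tame multi-bit flips---and this deviation means that some accepted moves (those with $f(x')\le f(x)$ that merely reshuffle indices within a region) may \emph{increase}~$g$. This loss term is absent for \rls and is precisely why the fitness-level argument of Theorem~\ref{theo:RLS} does not transfer to the \ea. I would bound the expected increase using the two design features of the potential: the coefficients grow slowly, so a $j$-bit flip, an event of probability at most $n^{-j}$, cannot manufacture more than a controlled amount of potential; and the resulting sums of coefficients are amenable to Lemma~\ref{lem:convexity}. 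Applying that lemma with $C=8/7$ to the per-index contributions $a_1,\dots,a_k$ that sum to~$X_t$, together with $k\le B$, is what turns the raw estimate into the exponent $8/7$ and the factor $B^{2/7}$. Subtracting the loss from the gain leaves a net drift that stays multiplicative far from the optimum but degrades to order $X_t^{8/7}/(en^2B^{2/7})$ once $X_t$ falls below~$B^2$, where the mismatch-driven headwind is proportionally largest; taking the smaller of the two regimes yields the stated minimum, with the constant $0.025$ absorbing the accounting.

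The main obstacle is exactly this balance. Unlike for \rls, the positive drift cannot be summed level by level, and the whole purpose of the finely tuned coefficients of Definition~\ref{def:potential} is to guarantee that the expected potential produced by far-reaching, $g$-worsening accepted flips is dominated by the decrease harvested from single-bit deletions and two-bit swaps. Establishing this domination uniformly over every current configuration---controlling the expected gain of~$g$ under flips of arbitrarily many bits and showing it cannot overturn the canonical decrease except for the controlled degradation below~$B^2$---rather than the convexity step or the elementary probability estimates, is where I expect the bulk of the estimation effort to lie.
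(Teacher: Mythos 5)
Your handling of statement~2 matches the paper (sum at most $n$ coefficients, each capped at $75Bn^7=O(n^8)$), and your gain term for statement~1 is sound: the $k^2$ exact two-bit swaps of a one-bit left of position~$B$ with a zero-bit among the $B$ rightmost positions are disjoint events of probability at least $1/(en^2)$ each, their $g$-decreases telescope to $k\cdot X_t$, and for non-tight points single-bit deletions do the job. This is even a somewhat cleaner route to the multiplicative branch than the paper's, which instead counts, for each misplaced one-bit $\ell>B$, the $\max\{\ell-B,\tilde z\}$ zero-bits to its right as swap partners.

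The proof nevertheless has a genuine gap, concentrated exactly where you say the effort lies but with a plan that would not carry it. First, the loss from accepted multi-bit mutations is not controlled by your argument that ``a $j$-bit flip has probability at most $n^{-j}$'': that holds for a \emph{fixed} set of $j$ bits, while the aggregate probability over all $\binom{n}{j}$ sets is constant, and an accepted flip trading one heavy, $\gamma$-capped one-bit for several medium zero-bits can increase $g$ substantially even though it decreases $f$. The entire block machinery of Definition~\ref{def:potential} --- the index $\kappa(i)$, the ratio properties $g_{s(j)}/g_{s(\kappa(i))}\le w_{s(j)}/w_{s(\kappa(i))}$ for $j\ge\kappa(i)$ and the reverse for $j<\kappa(i)$ --- exists precisely to show, conditioned on each event $A_{i,\ell}$ of Definition~\ref{def:ai-event}, that blocks left of $\kappa(i)$ contribute non-negatively and blocks right of it at most $\tfrac14 g_{s(i)}$, yielding $\E{\Delta_t\mid A_{i,\ell}}\ge 0.11\,g_{s(i)}$ (Lemma~\ref{lem:drift-under-ai}); nothing in your sketch replaces this. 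Second, you locate the exponent $8/7$ and the factor $B^{2/7}$ in the loss, but in the paper they arise on the \emph{gain} side: $\gamma_\ell=75B(\ell-B)^7$ gives $\ell-B\ge(g_\ell/(75B))^{1/7}$ swap partners for the one-bit at $\ell$, so each helpful event is worth of order $g_\ell^{8/7}B^{-1/7}/(en^2)$, and Lemma~\ref{lem:convexity} is applied to this gain in the direction $\sum_\ell g_\ell^{8/7}\ge B^{-1/7}\bigl(\sum_\ell g_\ell\bigr)^{8/7}$. Relatedly, your claim that the gain ``by itself already covers the multiplicative branch'' does not survive subtracting the loss: there are up to $B$ unhelpful events, each contributing $-\Theta(1)$ with probability comparable to that of a canonical swap, so the net drift stays positive only because every $\gamma_\ell$ with $\ell>B$ carries the factor $75B$ and the loss is charged against the per-event gain $0.11\,g_{s(i)}$ rather than against the global multiplicative sum --- a balancing act your proposal never performs.
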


Deferring the definition of the potential function~$g$ and the proof of  the previous lemma, we obtain our theorem.

\begin{proofof}{Theorem~\ref{theo:upper}}
We apply the variable drift theorem (Theorem~\ref{theo:variableDrift}) given 
the statements of Lemma~\ref{lem:drift-statement}. 
Using that $X_t\ge 1\eqqcolon \smin$ 
and $X_t = O(n^9)$ as well as the drift bound 
 \[h(X_t)\coloneqq \frac{ 0.025 X_t} { en^2} \max\{X_t^{1/7} / B^{2/7},1\}\text{,}\]
the expected optimisation time  is bounded by 
\begin{align*}
& \frac{\smin}{h(\smin)} + \int_{\smin}^{n^8} \frac{1}{h(x)} \,\mathrm{d}x \\ 
& = O(n^2) + \frac{en^2}{0.025}\left(\int_{1}^{B^2} \frac{1}{x} \mathrm{d}x+ 
B^{2/7}\int_{B^2+1}^{O(n^9)} \frac{1}{x^{8/7}} \mathrm{d}x \right)\\
& = O(n^2) + O(n^2)(O(\log B) + O(1)) = O(n^2\log^+ B),
\end{align*}
which completes the proof of the $O(n^2 \log^+ B)$ bound.

For the tail bound we use the multiplicative drift theorem (Theorem~\ref{theo:multdrift-upper}) 
with the simple bound $\E{X_t-X_{t+1} \mid X_t} \ge \frac{ 0.025 X_t} { en^2} $
of Lemma~\ref{lem:drift-statement} along with $X_t \le   n^8$ that implies $\ln(X_t/\smin)=O(\log n)$. 
Note that the theorem gives the  upper bound $O(n^2\log n)$ on the expected optimisation time so that  the tail bound  
can be obtained by setting $r=c\ln n$.
\qed\end{proofof}

In the following, we unroll the proofs of the drift statements. 
The proof of Lemma~\ref{lem:drift-statement} relies on the analysis of the drift of
the potential function~$g\colon \{0,1\}^m\to\R$.
We now introduce the setup required to define $g(x)$.

\begin{definition}
\label{def:potential}
Let an arbitrary linear function $f=\sum_{i=1}^n w_i x_i$, where $w_n\ge \dots\ge w_1$, 
under uniform constraint $x_1+\dots+x_n\ge B$ be given and let $\xopt$ be the (not necessarily unique) 
optimal search point having one-bits at the $B$ rightmost positions only. Let $m\coloneqq \card{\{w_{1},\dots,w_n\}}$ 
be the number of distinct weights and define  
$s(i)=\min\{j\mid \card{\{w_{1},\dots,w_j\}}\ge i\}$, where $i\in\{1,\dots,m\}$, 
as the start of the block of indices  having the $i$th largest weight
 as well as $s(m+1)\coloneqq n+1$. 
Also, let $K_i\coloneqq \{s(i),\dots,s(i+1)-1\}$ be the indices 
comprising the $i$th block of equal weights. 

For $j\in [n]$, we let 
\begin{equation*} \label{eqn:gamma_i}
\gamma_j \coloneqq 
\begin{cases}
1 & \text{ if $j\le B$,}\\
75B (j-B)^7  & \text {otherwise}.
\end{cases}
\end{equation*}
Based on this, define for all blocks $i\in \{2,\dots,m\}$ 
\begin{align*} \label{eqn:weights_g}
g_{s(i)} = \dots = g_{s(i+1)-1} & \coloneqq  \min\{\gamma_{s(i)}, g_{s(i-1)} \cdot w_{s(i)}/w_{s(i-1)}\}
\end{align*}
as well as $g(1)\coloneqq 1$.  
Finally, we let $g(x)\coloneqq \sum_{j=1}^n g_j x_j - \sum_{j=1}^B g_j $, \ie, $g(\xopt)=0$. 
To prepare the drift analysis, we define any block $i \in [m]$:
\begin{itemize}
\item $\kappa(i) := \max \{ j \le i \mid g_{s(j)} = \gamma_{s(j)} \}$, the most significant
block right of~$i$ (possibly $i$ itself) capping  according to the sequence $\gamma_i$,
\item $L(i) := \{ m,\dots,\kappa(i) \}$, the block indices left of (and including) the block $\kappa(i)$,
\item $R(i) := \{ \kappa(i) - 1,\dots,1\}$, the block indices right of block~$\kappa(i)$.
\end{itemize}
This concludes the definition of the potential function.
\end{definition}

We now work out some important properties of the potential~$g$ and along the way, present some underlying 
intuition for the definition. 
Considering the original weights $w_1,\dots,w_n$ in increasing order, 
the potential function assigns the same $g$-value to all indices within a block 
$K_i$ of equal $w$-value. Note that blocks may be of size~$1$. We also observe that 
the weights of $g$ can be equivalently defined as 
 $g_j=\min\{\gamma_{j}, g_{j-1} \cdot w_{j}/w_{j-1}\}$ for $j\in[n]$.

The idea of the potential function is
to cap the original weights at~$\gamma_i$ at the indices where the original weights increase too steeply 
and to rebuild their slope otherwise. In particular, we have $g_i\le \gamma_i$ for all $i\in[n]$. 
The intuition
is that the potential function will underestimate the progress made at blocks being at least as significant
as $\kappa(i)$, \ie, the blocks in $L(i)$. 	
In all less significant blocks (those in $R(i)$), we will pessimistically assume that they 
contribute a loss, and the choice of $\kappa(i)$ guarantees that this loss is overestimated. We 

As already mentioned, the potential function assigns~$0$ to all optimal search points (which are unique if and only if 
$w_B \neq w_{B+1}$). Inspecting the definition, we have 
\begin{align*}
g(x) & = \sum_{j > B\mid x_j=1} g_j    +\sum_{j \le B\mid x_j=1} g_j - \sum_{j=1}^B g_j \\
& = \sum_{j > B\mid x_j=1} g_j -  \sum_{j \le B\mid x_j=0} g_j,
\end{align*}
Hence, the accumulated weight of the one-bits outside the $B$ rightmost positions is an upper 
bound on the $g$-value; formally, 
\begin{equation}
g(x) \le \sum_{j > B\mid x_j=1} g_j .
\label{eq:g-respects-ones-and-zeros-1}
\end{equation}


As mentioned above, we will analyse the stochastic process $(X_t)_{t \ge 0}$
where $X_t = g(x^{(t)})$ for all $t$, and define $\Delta_t :=
X_t - X_{t+1}$. Recall that we are interested in the first point in time~$t$ where 
$X_t=0$ holds since $g(x)=0$ if and only if $f(x)=f(\xopt)$. The drift $\E{\Delta_t\mid X_t}$ of the potential function will be worked
out conditioned on certain events depending on two flipping bits. The following notions prepare the definition
of these events.

\begin{definition}
\label{def:idiuetc}
Given $x^{(t)}\in\{0,1\}^n$, denote by $x'$ the random search 
point created by mutation of $x^{(t)}$ (before selection). We define
\begin{itemize}
\item $I := \{ i \in [n] \mid x_i^{(t)}=1\}$ the one-bits in~$x^{(t)}$,
\item $I^* := \{i \in I \mid  x'_i=0 \}$ the one-bits flipping to~$0$,
\item $Z := \{ i \in [n] \mid x_i^{(t)}=0 \}$ the zero-bits in~$x^{(t)}$,
\item $Z^* := \{i \in Z \mid x'_i=1  \}$ the zero-bits flipping to~$1$.
\item $\sigma_i \coloneqq \card{I^*\cap K_i} - \card{Z^*\cap K_i}$ the surplus 
of flipping one-bits within block $K_i$, where $i\in [m]$. 
\end{itemize}
\end{definition}
Note that the random sets $I^*$ and $Z^*$ are disjoint and 
 that the remaining bits in $[n]$ 
contribute nothing to the $\Delta_t$-value.

Obviously, for $\Delta_t\neq 0$ it is necessary that $x^{(t+1)} \neq x^{(t)}$ (\ie, the 
offspring is accepted) and that the number 
of one-bits changes in at least one block since mutations that only change the positions of one-bits 
within the blocks neither change $f$- nor $g$-value. We fix an arbitrary
search point $x^{(t)}$ and let~$A$
be the event that both $x^{(t+1)} \neq x^{(t)}$ and 
there is at least one $i\in[m]$ such that $\sigma_i\neq 0$, \ie, the number of 
one-bits changes in at least one block. Then event~$A$ requires that 
\begin{equation*} \label{eqn:sumoverwsone}
I^* \neq \emptyset \text{ and } \sum_{j \in I^*} w_j - \sum_{j \in Z^*} w_j \ge 0.
\end{equation*}

To simplify the analysis of blocks of equal weights, we from now on 
use the equivalence
\[
\sum_{j \in I^*} w_j - \sum_{j \in Z^*} w_j  = \sum_{i=1}^m \sigma_i w_{s(i)}.
\]

Hence, for $A$ to occur it is necessary that 
\begin{equation*} \label{eqn:sumoverws}
\sum_{i \mid \sigma_i > 0} \card{\sigma_i}  w_{s(i)} - \sum_{i \mid \sigma_i<0 \wedge  i\ge k} \card{\sigma_i} w_{s(i)} \ge 0,
\end{equation*}
for arbitrary $k\in[m]$ 
since we only ignore the loss due to the bits right of block $k$. In the following, $k=\kappa(i)$ will be used where $i$ is the leftmost 
block such that $\sigma_i>0$.

We now decompose
the event $A$ according to two indices~$i\in[m], \ell\in[n]$,  
where $i$ relates to the leftmost block 
 that flips more ones than zeros, and $\ell$ to the leftmost flipping one-bit 
from block~$i$.

\begin{definition}
\label{def:ai-event}
The event $A_{i,\ell}$, where $i\in [m]$ and $\ell\in[n]$, occurs iff the
following conditions hold simultaneously.
\begin{enumerate}
\item $I^*\neq \emptyset$.
\item $i \coloneqq  \max\{i \mid  \sigma_i>0 \}$.
\item $\ell = \max( I^* \cap K_i )$ 
\item $\sum_{j \mid \sigma_j>0} \card{\sigma_j} w_{s(j)} - \sum_{j\mid \sigma_j<0 \wedge j \ge \kappa(i)} \card{\sigma_j} w_{s(j)} \ge 0$.
\item A feasible search point  is obtained by flipping the bits from $I^*\cup Z^*$  in~$x^{(t)}$.
\end{enumerate}
We distinguish between two types of such events: $A_{i,\ell}$ is called \emph{potentially helpful} if $\ell>B$ and \emph{unhelpful} otherwise. 
\end{definition}

Obviously, the events $A_{i,\ell}$ are mutually disjoint. 
Since each accepted mutation that changes the value of at least one block flips at least one one-bit, 
the union of the events $A_{i,\ell}$ is a superset of~$A$ (in other words, is necessary for~$A$). If 
the leftmost flipping one-bit is among the $n-B$ most significant positions then the corresponding mutation may simultaneously flip a zero-bit 
from the $B$ least significant  positions and increase the number of one-bits at the latter positions; hence, it may be helpful. (It is 
not helpful, \eg, if the one-bit is of the same weight as position~$B$.)  
If the left-most flipping one-bit is already at the $B$ least significant positions, then the mutation cannot increase the number of one-bits at 
these positions or the resulting search point is no longer tight, and is therefore called unhelpful.

The key inequality used to bound the  drift is stated in the following lemma, which decomposes the set of 
possible events $A_{i,\ell}$ into potentially helpful and unhelpful ones.

\begin{lemma}
\label{lem:drift-under-ai}
Consider any $t\ge 0$, $i\in [m]$ and $\ell\in K_i$ such that $\Prob(A_{i,\ell})>0$. Then 
$\E{\Delta_t\mid A_{i,\ell}} \ge  0.11g_{s(i)}$ if $\ell > B$ and  $\E{\Delta_t\mid A_{i,\ell}}\ge -3/2$ otherwise.
\end{lemma}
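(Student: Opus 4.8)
The plan is to establish the two bounds of Lemma~\ref{lem:drift-under-ai} separately, conditioning throughout on a fixed event $A_{i,\ell}$. Let me set up the common framework first. Conditioned on $A_{i,\ell}$, the mutation flips the bit-sets $I^*$ and $Z^*$, the number of one-bits changes in at least one block, block~$i$ is the leftmost block with positive surplus $\sigma_i>0$, and $\ell=\max(I^*\cap K_i)$ is the leftmost flipping one-bit from block~$i$. The change in potential is $\Delta_t = g(x^{(t)})-g(x^{(t+1)}) = \sum_{j\in I^*} g_j - \sum_{j\in Z^*} g_j = \sum_{i'=1}^m \sigma_{i'} g_{s(i')}$, using that $g$ is constant on blocks. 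So the entire analysis reduces to controlling the block-surplus sum $\sum_{i'} \sigma_{i'} g_{s(i')}$.

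\textbf{The unhelpful case ($\ell\le B$).} Here the goal is merely the lower bound $\E{\Delta_t\mid A_{i,\ell}}\ge -3/2$, \ie, an upper bound on the potential \emph{loss}. The plan is to decompose the $g$-change into contributions from the left blocks $L(i)$ (those at least as significant as $\kappa(i)$) and the right blocks $R(i)$. Since $i$ is the leftmost block with positive surplus, every block strictly left of~$i$ has $\sigma_{i'}\le 0$ and so can only increase~$\Delta_t$; the real loss comes from the right blocks $R(i)=\{\kappa(i)-1,\dots,1\}$ where one-bits may be displaced. The key is to use the defining inequality (condition~4 of $A_{i,\ell}$), namely $\sum_{j\mid\sigma_j>0}|\sigma_j|w_{s(j)}\ge \sum_{j\mid\sigma_j<0,\,j\ge\kappa(i)}|\sigma_j|w_{s(j)}$, together with the capping relation $g_{s(\kappa(i))}=\gamma_{s(\kappa(i))}$ and the multiplicative structure $g_j\le g_{j-1}w_j/w_{j-1}$, to convert the $w$-based feasibility bound into a $g$-based bound and show the loss over $R(i)$ is bounded by a small constant. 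I expect the geometric decay of $\gamma_j$ toward the right of $\kappa(i)$ and the $75B(j-B)^7$ growth on the left to be what forces the loss to be summable into the constant $3/2$.

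\textbf{The potentially helpful case ($\ell>B$).} Here I must prove the genuine progress bound $\E{\Delta_t\mid A_{i,\ell}}\ge 0.11\,g_{s(i)}$. The plan is to lower-bound $\Delta_t = \sum_{i'}\sigma_{i'}g_{s(i')}$ by isolating the guaranteed gain from block~$i$ — the flipping one-bit at position $\ell>B$ contributes at least $g_{s(i)}$ to the positive part — and then subtracting the worst-case loss from the right blocks, which by the argument above is controlled by a fraction of $g_{s(i)}$ rather than a fixed constant, because the feasibility inequality (condition~4) now ties the right-block loss directly to the positive surplus at and left of~$i$. The crux is to show that the right-block loss is at most, say, $0.89\,g_{s(i)}$ so that a net gain of $0.11\,g_{s(i)}$ survives. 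This should follow from the choice of $\kappa(i)$ as the capping block: because $g_{s(\kappa(i))}=\gamma_{s(\kappa(i))}$ and the $\gamma$-sequence decays geometrically (factor-$7$ power gaps) to the right, the sum $\sum_{j<\kappa(i)}|\sigma_j|g_{s(j)}$ is a geometrically-bounded fraction of $g_{s(i)}$. I would make this quantitative using Lemma~\ref{lem:convexity} or a direct geometric-series estimate to absorb multiple flipping bits per block.

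The main obstacle I anticipate is the helpful case: bounding the right-block loss $\sum_{j\in R(i)}|\sigma_j|g_{s(j)}$ against $g_{s(i)}$ while simultaneously accounting for the possibility of several flipping zero-bits spread across many right blocks (a single mutation can flip up to $n$ bits). The feasibility/acceptance inequality only controls the $w$-weighted loss, so the delicate part is transferring this into a $g$-weighted bound via the capping identity at $\kappa(i)$ and the per-step multiplicative relation $g_j/g_{j-1}\le w_j/w_{j-1}$, then summing the resulting geometric tail and verifying the constant genuinely stays below $0.89$. I expect the precise constants ($75$, $0.11$, $3/2$) to be reverse-engineered so that this geometric sum closes, so the proof will hinge on carefully chaining these per-block inequalities rather than on any single clever step.
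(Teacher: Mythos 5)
Your high-level decomposition matches the paper's: split $\Delta_t$ into a contribution $\Delta_L(i)$ from blocks $\{\kappa(i),\dots,m\}$ and a loss $\Delta_R(i)$ from blocks in $R(i)$, use condition~4 of $A_{i,\ell}$ together with the ratio relations between $g$- and $w$-weights to show the left part is non-negative, and then control the right-block loss. However, there is a genuine gap in how you propose to control $\Delta_R(i)$. You want to bound it \emph{pathwise}, arguing that the acceptance/feasibility inequality "ties the right-block loss directly to the positive surplus" and that a geometric-series estimate closes the constant. This cannot work: condition~4 deliberately excludes the blocks right of $\kappa(i)$ from the loss side, and the full acceptance condition only controls the $w$-weighted loss there. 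Since for $j<\kappa(i)$ the ratio $g_{s(j)}/w_{s(j)}$ is \emph{larger} than at $\kappa(i)$ (the capping goes the wrong way for you), a $w$-bounded loss gives no useful $g$-bound on $R(i)$; indeed a single accepted mutation can flip many zero-bits in $R(i)$ whose $w$-weights are negligible but whose $g$-weights sum to far more than $g_{s(i)}$, so the claimed deterministic bound "$\le 0.89\,g_{s(i)}$" is simply false for individual outcomes. Also note there is no "geometric decay" in the $\gamma$-sequence ($\gamma_j=1$ for $j\le B$ and grows polynomially above), so the series you hope to sum does not exist.

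The missing idea is that the bound is inherently \emph{probabilistic}, in two places. First, for the gain: the paper conditions on the sub-event $S_{i,\ell}$ (only bit $\ell$ and the one forced compensating zero-bit flip among the $L(i)$-blocks), which has probability at least $1/e$ given $A_{i,\ell}$, yielding $\E{\Delta_L(i)\mid A_{i,\ell}}\ge g_{s(i)}/e$ — not $g_{s(i)}$ as you assume. Second, for the loss: the one zero-bit forced to flip (to keep the search point tight) is uniform over $\{B,\dots,s(\kappa(i))-1\}$ and every other zero-bit in $R(i)$ flips independently with probability $1/n$, so $\E{\Delta_R(i)\mid A_{i,\ell}}\le \frac{2}{s(\kappa(i))-B}\sum_{k=B}^{s(\kappa(i))-1}\gamma_k$; the degree-$7$ polynomial growth of $\gamma$ makes this average at most $g_{s(i)}/4$, and $1/e-1/4>0.11$. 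Similarly, in the unhelpful case the constant $-3/2$ does not come from any decay of $\gamma$ but from the tail bound $\Prob(S=s)\le 2^{-s}$ on the number $S$ of additional flipping one-bits among the $B$ rightmost positions, combined with the fact that acceptance forces each flipping zero-bit $z$ to satisfy $g_z\le S+1$. Without these expectation-over-randomness arguments the lemma cannot be established along the lines you sketch.
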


Before we prove Lemma~\ref{lem:drift-under-ai}, let us show how it can be used to prove
Lemma~\ref{lem:drift-statement}.

\begin{proofof}{Lemma~\ref{lem:drift-statement}}
We still fix an arbitrary search point $x^{(t)}$, denote by $X_t=g(x^{(t)})$ its potential
and investigate the following step. As observed above, in the step the potential
remains either unchanged or a certain event $A_{i,\ell}$ occurs. 
Hence, the drift can be expressed as
\begin{equation*}
\E{X_t-X_{t+1} \mid X_t}
=  \sum_{i \in [m], \ell\in K_i,\Prob(A_{i,\ell})>0} \E{\Delta_{t} \mid A_{i,\ell}}\cdot  \Prob(A_{i,\ell}).
\end{equation*}
Using Lemma~\ref{lem:drift-under-ai}, the last expression is at least
\begin{align}
& \sum_{i\in [m], \ell\in K_i\cap\{B +1,\dots,n\},\Prob(A_{i,\ell})>0}
0.11g_{s(i)}  \Prob(A_{i,\ell})  \notag\\ & \qquad\qquad\qquad + \sum_{i\in [m], \ell\in K_i\cap\{1,\dots,B\},\Prob(A_{i,\ell})>0}
-(3/2)  \Prob(A_{i,\ell}).
\label{eq:drift-in-g-and-prob-ai}
\end{align}
Hence, we have to bound $\Prob(A_{i,\ell})$ from below for those events that are possible, taking into account the different 
signs of the terms. 

We first show that may basically concentrate on the case $\ell>B$ of potentially helpful mutations, more precisely, we claim 
that the bound \eqref{eq:drift-in-g-and-prob-ai} is at least 
\begin{equation}
\sum_{i\in [m], \ell\in K_i\cap\{B +1,\dots,n\},\Prob(A_{i,\ell})>0}
0.05g_{s(i)}  \Prob(A_{i,\ell}).
\label{eq:drift-in-g-and-prob-ai-1}
\end{equation}
To prove the claim, we carefully analyse $\Prob(A_{i,\ell})$ in both cases.

If $\Prob(A_{i,\ell})>0$ then there is a one-bit at position~$\ell$. If the current search point is not tight, 
already flipping bit~$\ell$ alone is accepted and we obtain $\Prob(A_{i,\ell}) \ge (1/n)(1-1/n)^{n-1}\ge 1/(en)$ as 
well as trivially $\Prob(A_{i,\ell})\le 1/n$. If the constraint is tight 
there are $B-1$ other one-bits in $x^{(t)}$. 
 Recall  that $A_{i,\ell}$ requires  $\ell$ to be  the leftmost flipping one-bit in the leftmost 
block flipping more ones than zeros. For the offspring to be feasible (Condition~5 of $A_{i,\ell}$) a zero-bit right 
of~$\ell$ must flip simultaneously with $\ell$. Let~$z$ be the number of such bits. Similarly as before, we obtain 
$\Prob(A_{i,\ell}) \ge (z/n^2)(1-1/n)^{n-1}\ge z/(en^2)$ and $\Prob(A_{i,\ell})\le z/n^2$. Altogether, for any two 
pairs $(i,\ell)$, $(i',\ell')$ where both
$\Prob(A_{i,\ell})>0$ and $\Prob(A_{i',\ell'})>0$,  
we have \begin{equation}
\frac{\Prob(A_{i',\ell'}) }{e} \le \Prob(A_{i',\ell'}) \le \Prob(A_{i,\ell}).
\label{eq:factor-e-difference}
\end{equation}
%
Note that we assume a tight, non-optimal search point. 
 Now, there is at least one possible 
events $A_{i,\ell}$ where $\ell>B$ and there are at most $B$ such events where $\ell\le B$
Using this observation and \eqref{eq:factor-e-difference}, we link each event with positive drift to at most~$B$ events with 
negative drift and 
bound \eqref{eq:drift-in-g-and-prob-ai} from below by 
\[
\sum_{i\in [m], \ell\in K_i\cap\{B +1,\dots,n\},\Prob(A_{i,\ell})>0} \left(\frac{0.11g_s(i)}{e} - B\frac{3}{2}\right).
\]
Since $g_\ell\ge 75B$ for $\ell>B$, we have $\frac{0.11g_s(i)}{e} - B (3/2) \ge \frac{0.11g_s(i)}{e} - g_s(i)/50 \ge 0.05g_i$, so altogether  
the bound \eqref{eq:drift-in-g-and-prob-ai} on the drift  is at least 
\[
\sum_{i\in [m], \ell\in K_i\cap\{B +1,\dots,n\},\Prob(A_{i,\ell})>0} 0.05g_{s(i)} \Prob(A_{i,\ell})
\]
as claimed.

We proceed by bounding \eqref{eq:drift-in-g-and-prob-ai-1} further from below. To this end, we derive two different lower 
bounds on $\Prob(A_{i,\ell})>0$, where $\ell>B$. 
Considering a mutation that flips 
bit~$\ell>B$, the mutation is accepted if it flips a zero-bit right of~$\ell$ and does not flip any further bits. 
Firstly,  even if 
all $B$ one-bits are right of (and including) bit~$\ell$, there are at least $\ell-B$ zero-bits right of~$\ell$. 
Secondly, denoting by  
$\Hell\coloneqq \sum_{i=1}^{B} x_i^{(t)}$  the number of 
zero-bits 
among the $B$ rightmost positions,  
there are at least $\Hell$ zero-bits right of~$\ell$. 
%
Counting the at least $\min\{\ell-B,\Hell\}$ different ways of flipping a zero-bit right of~$\ell$,  
we conclude that 
\begin{equation}
\Prob(A_{i,\ell})\ge \frac{\max\{\ell-B, \Hell\}}{n^2}\left(1-\frac{1}{n}\right)^{n-2}\ge \frac{\max\{\ell-B,\Hell\}}{en^2}
\label{eq:bound-ai-ell}
\end{equation} if $A_{i,\ell}$ is possible.

 We will now relate 
the expression $\ell-B$ to the factor $g_{s(i)}$ appearing in~\eqref{eq:drift-in-g-and-prob-ai-1}. First of all, since 
$\ell$ appears in block~$i$ and all bits in a block have equal weight, we have $g_{s(i)}=g_{\ell}$. 
Next 
 we note that 
$g_{\ell} \le \gamma_{\ell} = 75B(\ell-B)^7$ by Definition~\ref{def:potential} for $\ell>B$, 
so $\ell-B \ge (g_{s(i)}/75)^{1/7}\ge (1/2)(g_{s(i)})^{1/7}B^{-1/7}$. 
Plugging this into Equation~\eqref{eq:bound-ai-ell}, 
we obtain (if $A_{i,\ell}$ is possible) that 
\begin{equation}
\Prob(A_{i,\ell})\ge \frac{\max\{(g_{s(i)})^{1/7}/(2B^{1/7}),  \Hell\} }{en^2}.
\label{eq:prob-ai-ell-bound-oneseventh-max}
\end{equation}


We will now apply \eqref{eq:prob-ai-ell-bound-oneseventh-max} to obtain our final bound on \eqref{eq:drift-in-g-and-prob-ai-1}.  
%
Let $\tilde{I}=\{x_i\mid x_i=1\wedge i>B\}$ be the set of ones-bits at the $n-B$ leftmost positions.  
Since for each~$i\in [m]$ 
there are $\card{K_i\cap \tilde{I}}$ disjoint events $A_{i,\ell}$ each, namely one
 for each one-bit~$\ell$ within block~$i$,  
we obtain by combining \eqref{eq:drift-in-g-and-prob-ai-1} and \eqref{eq:prob-ai-ell-bound-oneseventh-max} that
\begin{align}
& \E{X_t-X_{t+1} \mid X_t}  \notag\\
& \ge 
\sum_{i\in [m]\mid K_{i}\cap \tilde{I} \neq \emptyset} \card{K_i\cap \tilde{I}} 0.05g_{s(i)} \Prob(A_{i,\ell})
\notag\\
& \ge 
\sum_{i\in [m]\mid K_{i}\cap \tilde{I} \neq \emptyset} \card{K_i\cap \tilde{I}}\frac{ 0.025 \max\{(g_{s(i)})^{8/7}B^{-1/7}, g_{s(i)}\cdot \Hell\} }{en^2}.
\label{eq:e-ai-ell-bound-max}
\end{align}
Using the estimate \[
\sum_{i\in \tilde{I}}(g_i)^{8/7} \ge \left(\sum_{i\in \tilde{I}}(g_i)\right)^{8/7} (\card{\tilde{I}})^{-1/7} \ge B^{-1/7} \left(\sum_{i\in \tilde{I}}(g_i)\right)^{8/7}\]
proved in Lemma~\ref{lem:convexity} 
and recalling \eqref{eq:g-respects-ones-and-zeros-1}
 we finally have 
\begin{align*}
 \E{X_t-X_{t+1} \mid X_t}  
& \ge \frac{0.025}{en^2} \min\left\{\frac{(g(x^{(t)}))^{8/7}  }{B^{2/7}}, g(x^{(t)})\right\} \\ 
		& = 
\frac{0.025g(x^{(t)}}{en^2} \min\left\{\frac{(g(x^{(t)}))^{1/7}  }{B^{2/7}}, 1\right\}.
\end{align*}
This proves the first statement of Lemma~\ref{lem:drift-statement}.

For the second statement of Lemma~\ref{lem:drift-statement}, we simply 
use that $g_i\le 75Bi^7$, so for all $x^{(t)}$ it holds that 
$g(x^{(t)})\le \sum_{i=1}^n g_i \le nB\cdot 75n^7 \le 75n^9$. Also, since $g_i \ge 1$ 
for $i\in [n]$, 
each non-optimal search point $x^{(t)}$ must satisfy $g(x^{(t)})\ge 1$.\qed\end{proofof}

The still outstanding proof of Lemma~\ref{lem:drift-under-ai}
requires a careful analysis of the one-step drift, taking
into account the specific structure of the drift function.

\begin{proofof}{Lemma~\ref{lem:drift-under-ai}}
Recall that we want to condition on the event~$A_{i,\ell}$ (Definition~\ref{def:ai-event}),
where $i$ is the leftmost block flipping more ones than zeros.
Moreover, recall the notions introduced 
in Definitions~\ref{def:potential} and \ref{def:idiuetc}.
Let
\begin{equation*}
\begin{split}
\Delta_L(i) & \coloneqq \left(\sum_{j\mid \sigma_j > 0}  \card{\sigma_j}  g_{s(j)} - 
  \sum_{j\mid \sigma_j<0 \wedge j \ge \kappa(i)} \card{\sigma_j}  g_{s(j)}\right) \cdot\indic{A} , \\
\Delta_R(i) & \coloneqq \left(\sum_{j\mid \sigma_j>0 \wedge j< \kappa(i)} \card{\sigma_j} g_{s(j)}\right) \cdot\indic{A},
\end{split}
\end{equation*}
where $\indic{A}$ denotes the indicator random variable of event~$A$. 
Recall that $\Delta_t=0$ if $A$ does not occur. Otherwise, 
$\Delta_t=\sum_{j\mid \sigma_j>0} \card{\sigma(j)} g_{s(j)} - \sum_{j\mid \sigma_j<0} \card{\sigma(j)} g_{s(j)}$. Hence,  
we have $\Delta_t = (\Delta_L(i) - \Delta_R(i))$ for all~$i\in[m]$. By 
linearity of expectation, we obtain
\begin{equation} \label{eqn:deltaexpectation}
 \E{ \Delta_t \mid A_{i,\ell}}
=  \E{\Delta_L(i) \mid A_{i,\ell} } - \E{\Delta_R(i)  \mid A_{i,\ell} }.
\end{equation}

We first show that $\left( \Delta_L(i) \mid A_{i,\ell} \right)$ is
a non-negative random variable, \ie, the probability of any negative outcome is~$0$.
To prove this, assume that $A_{i,\ell}$ holds, which implies that no block left of~$i$ 
flips more ones than zeros.

We now inspect the relation between the weights of the original function and the potential function. Here 
we exploit that the ratio of $g$-values and $w$-values of two blocks $i>j$ is the same unless the weight 
of block $i$ is capped by the minimum operator in the definition of $g_{s(i)}$ in Definition~\ref{def:potential}. Otherwise, 
the ratio may be smaller. 
Looking also into symmetrical cases, for any $i\in\ [m]$ we obtain from Definition~\ref{def:potential} that 
\begin{gather}
\label{eq:weightcapsthree}
\frac{g_{s(j)}}{g_{s(\kappa(i))}} = \frac{w_{s(j)}}{w_{s(\kappa(i))}} \text{ for $i\ge j \ge\kappa(i)$,}\\
\label{eq:weightcaps}
\frac{g_{s(j)}}{g_{s(\kappa(i))}} \le \frac{w_{s(j)}}{w_{s(\kappa(i))}} \text{ for $j\ge \kappa(i)$,}\\
\label{eq:weightcapstwo}
\frac{g_{s(j)}}{g_{s(\kappa(i))}} \ge \frac{w_{s(j)}}{w_{s(\kappa(i))}} \text{ for $j <\kappa(i)$.}
\end{gather}

Hence,
\begin{equation*}
\begin{split}
& (\Delta_L(i) \mid A_{i,\ell})  = 
\left(\sum_{j\mid \sigma_j > 0}  \card{\sigma_j}  g_{s(j)} - 
  \sum_{j\mid \sigma_j<0 \wedge j \ge \kappa(i)} \card{\sigma_j}  g_{s(j)}\right) \\
& \ge 
\left(\sum_{j\mid \sigma_j > 0}  \card{\sigma_j}  g_{s(\kappa(i))} \frac{w_{s(j)}}{w_{s(\kappa(i))}}
- \sum_{j\mid \sigma_j<0 \wedge j \ge \kappa(i)} \card{\sigma_j} g_{s(\kappa(i))} \frac{w_{s(j)}}{w_{s(\kappa(i))}} \right)\\
& \ge 0,
\end{split}
\end{equation*}
where the first inequality uses \eqref{eq:weightcapsthree}--\eqref{eq:weightcapstwo} along with the fact that 
no block left of $i$ has positive $\sigma$-value, and 
the last inequality holds by the fourth item from the definition of~$A_{i,\ell}$ (Definition~\ref{def:ai-event}).

We note that according to the fifth item of Definition~\ref{def:ai-event}, 
this event 
may imply 
that a bit~$j^*\in Z$ 
flips to~$1$ simultaneously with a one-bit in block~$i$ flipping to~$0$. 
This is the case if the constraint is tight in the search point $x^{(t)}$, which we again 
pessimistically 
assume to be the case (if $x^{(t)}$ had more than $B$ one-bits, flipping only $\ell$ would already be accepted).

In the following, we concentrate on the case $\ell>B$, \ie, the case of a potentially helpful 
mutation, and consider the other case at the end of this proof. 
Now let $S_{i,\ell}$ be the event that the following three events happen simultaneously:
\begin{enumerate}
\item 
 $\card{\{I^*\cup Z^*\}\cap K_j} = 0 $ for all $j \in\{\kappa(i),\dots,m\}\setminus\{i\}$
\item  $\card{ I^*\cap K_i }=1$ and $\ell\in I^*\cap K_i$, 
\item  $\card{ Z^*\cap K_i }=0$, 
\end{enumerate}
\ie, block $i$ is the only one in $L(i)$ that 
contributes to $\Delta_L$ by flipping exactly one one-bit at position~$\ell$.
 We have
\begin{multline*}
\E{\Delta_L(i) \mid A_{i,\ell} }  = \E{ \Delta_L(i) \mid A_{i,\ell} \cap S_{i,\ell} } \cdot \Pr(S_{i,\ell} \mid A_{i,\ell}) \\
  + \E{\Delta_L(i) \mid A_{i,\ell} \cap \bar{S}_{i,\ell} } \cdot \Pr(\bar{S}_{i,\ell} \mid A_{i,\ell})
\end{multline*}
by the law of total probability. As the random variable $\left( \Delta_L(i) \mid A_{i,\ell}
\right)$ cannot have any negative outcomes, all these conditional expectations are
non-negative as well. From~\eqref{eqn:deltaexpectation} we thus derive
\begin{equation} 
\E{\Delta_t \mid A_{i,\ell}} \ge \E{\Delta_L(i) \mid A_{i,\ell} \cap S_{i,\ell} }\cdot \Pr(S_{i,\ell} \mid A_{i,\ell})
 - \E{\Delta_R(i) \mid A_{i,\ell}}.
\label{eqn:expectationcentral}
\end{equation}

We will now bound the terms from \eqref{eqn:expectationcentral} from below to obtain our
result.
  For $(S_{i,\ell} \mid A_{i,\ell})$ to occur, it is
sufficient  that all bits in the blocks in $L(i)$ except the one-bit~$\ell$ in block~$i$ and bit~$j^*$ 
 do not flip (note that these bits flip since we condition on~$A_{i,\ell}$). 
 Consequently, $\Pr(S_{i,\ell} \mid A_{i,\ell})
 \ge (1-1/n)^{n-2} \ge 1/e$. 
Moreover, since no zero-bits in $L(i)$ flip under $A_{i,\ell}\cap S_{i,\ell}$, $j^*$ must be in a block in $R(i)$. 
Hence, $\E{\Delta_L(i) \mid A_{i,\ell}
  \cap S_{i,\ell} } \ge g_{s(i)}$. Altogether, 
	\begin{equation}
	\label{eq:leftcontrib}
	\E{ \Delta_L(i) \mid A_{i,\ell} \cap S_{i,\ell} } \cdot \Pr(S_{i,\ell} \mid A_{i,\ell}) \ge \frac{g_{s(i)} }{e} .
	\end{equation}

		Finally, we need a bound on~$\E{\Delta_R(i)\mid A_{i,\ell}}$, which is determined by the bits in~$R(i)$ that flip to~$1$, \ie, 
		bits 
	from blocks $1,\dots,\kappa(i)-1$. Note that event $A_{i,\ell}$ 
	might imply that at least one of these bits flips to~$1$ for sure to maintain feasibility of the search 
	point. We still pessimistically assume this to happen and denote by $j^*$ the random index of the 
	zero-bit that is forced to flip.
  Furthermore, since $\ell>B$ and exploiting the monotonicity of the weights $g_k$, 
	we pessimistically assume that bits $1,\dots,B-1$ are all~$1$ in~$x^{(t)}$  
so that the contribution of bit~$j^*$ becomes as large as possible. Then, since the flips 
in $R(i)$ are not part of the fourth item in the definition of~$A_{i,\ell}$ (Definition~\ref{def:ai-event}), 
we conclude that $j^*$ is 
uniform on $\{B,\dots,s(\kappa(i))-1\}$
and contributes at most 
\[
\frac{1}{s(\kappa(i))-B}\sum_{k=B}^{s(\kappa(i))-1} g_k. 
\]
With respect to the bits different from $j^*$, we exploit that they 
 are flipped independently. Hence, on~$A_{i,\ell}$, the probability 
that 
$k\in Z\cap\{B,\dots,s(\kappa(i))-1\}\setminus\{j^*\}$ flips is bounded from above by $\frac{1}{n}$. 
Pessimistically, we assume that $A$ occurs in such a mutation. By  using linearity of expectation and 
combining with the contribution of~$j^*$, it follows that 
\[
\E{\Delta_R(i) \mid A_{i,\ell} } \le  \sum_{k=B}^{s(\kappa(i))-1} \frac{1}{n }g_k + 
\frac{1}{s(\kappa(i))-B}\sum_{k=B}^{s(\kappa(i))-1} g_k,
\]
which is at most
\[
\frac{2}{s(\kappa(i))-B}\sum_{k=B}^{s(\kappa(i))-1} g_k \le \frac{2}{s(\kappa(i))-B}\sum_{k=B}^{s(\kappa(i))-1} \gamma_k,
\]
where we used that $g_k\le \gamma_k$ for all $k\in[n]$ by Definition~\ref{def:potential}. 
Along with~\eqref{eqn:expectationcentral} and~\eqref{eq:leftcontrib}, we obtain
\begin{align}
 \E{ \Delta_t \mid A_{i,\ell}}  
 & \ge \frac{g_{s(i)}}{e} - \frac{2}{s(\kappa(i))-B}\sum_{k=B}^{s(\kappa(i))-1} \gamma_k.
\notag
\label{eq:centraltwo}
\end{align}

We are left with the sum over~$k$. Plugging in the definition of~$\gamma_k$ and taking care of its different cases, this is estimated by
\begin{align*}
 \sum_{k=B}^{s(\kappa(i))-1} \gamma_k & 
 \le 1 + \sum_{k=B+1}^{s(\kappa(i))-1} 75B(k-B)^7 \\
& \le 1 + \frac{75B}{8}\left((s(\kappa(i))-B)^8-1^8\right) \le \frac{75B(s(\kappa(i))-B)^8}{8} \\
& \le 
\frac{(s(\kappa(i))-B) g_{s(i)}}{8},  
\end{align*}
where we used that $g_{s(\kappa(i))} = \gamma_{s(\kappa(i))} = 75B(s(\kappa(i))-B)^7$ according to the definition 
of $\kappa(i)$ as well as $g_{s(i)} \ge g_{s(\kappa(i))}$.

Hence, altogether, 
\begin{align*}
\E{ \Delta_t \mid A_{i,\ell}}  & \ge
 \frac{g_{s(i)} }{e} - 
\frac{2g_{s(i)}(s(\kappa(i))-B)}{8(s(\kappa(i))-B)} \ge 0.11 g_{s(i)},
\end{align*}
which concludes the proof in the case $\ell>B$.

We are left with the case $\ell\le B$, \ie, an unhelpful mutation.
Recalling that we work under $A_{i,\ell}$, we 
note that $\ell$ cannot be from block~$1$ since 
then $1$~would be the number of the leftmost block that flips more ones than zeros, in contradiction with 
the fourth case of the definition of~$A_{i,j}$. Hence, there is a zero-bit~$j^*$ right of $\ell$ in a lower-numbered block 
that flips to~$1$ simultaneously with~$\ell$ flipping to~$0$. If no other one-bit flips then zero-bits left of~$\ell$ cannot flip simultaneously 
with the pair $(\ell,j^*)$; however, if there are further flips of one-bits among the $B$ rightmost positions (recalling that $\ell$ 
is the left-most flipping one-bit) then 
there might be zero-bits flipping left of~$\ell$. Let $S$ be the number of additionally flipping one-bits among the $B$ 
rightmost positions, \ie, 
there are $S+1$ flipping one-bits there.  
For the mutation to be accepted (and $A_{i,\ell}$ to occur), the weight $w_z$ of each potentially 
flipping zero-bit~$z$ (where $z>\ell$ is possible) cannot exceed $(S+1)w_\ell$. Due to~\eqref{eq:weightcaps}, 
we have $g_z/g_\ell \le w_z/w_\ell$. Along with $g_\ell=1$ for $\ell\le B$, we have 
$g_z\le S+1$. Hence, the expected contribution 
of these bits to $\Delta_t$ is no less than $-(S+1)$. 
If we can show that $\Prob(S=s)\le 2^{-s}$, then we altogether have in the case $\ell\le B$ that 
\[
\E{ \Delta_t \mid A_{i,\ell}} \ge -\sum_{s=1}^\infty (s+1)2^{-s} \ge -3/2
\]
as claimed. To conclude the proof, we note that (still on $A_{i,\ell}$) $S=s$ can only happen if $s$ one-bits and $s$ zero-bits flip simultaneously 
and in addition to the flipping one-bit at position~$\ell$.
The probability of this happening is maximized if there are $n/2$ zero- and one-bits and is therefore at most 
\[
\binom{n/2}{s}\binom{n/2}{s} \frac{1}{n^{2s}} \le \left(\frac{(n/2)^s}{s!}\right)^{2} \frac{1}{n^{2s}} 
\le \frac{1}{2^{s^2} (s!)^2} \le 2^{-s}
\]
as suggested.\qed 
\end{proofof}

\section{A Tight Bound for Specific Instances}
As mentioned above, we can show a tight runtime bound of $O(n^2)$ for instances where the $B$ least significant weights 
are identical, \ie, $w_B=w_1$; this includes the case that the function equals $\onemax$. The analysis is in very large 
tracks identical to the one from the previous sections proving 
Theorem~\ref{theo:upper}  
such that we only describe the places where changes are necessary. We prove the following theorem.

\begin{theorem}
\label{theo:upper-modified}
Let a linear function $f(x)=w_1x_1+\dots w_nx_n$, where $w_n\ge \dots \ge w_1$,  
under a uniform constraint $x_1+\dots+x_n\ge B$ for $B\in\{1,\dots,n\}$ be given. If $w_1=w_B$ then
the expected optimisation time of the  \ea optimizing $f$ under the constraint 
is upper bounded by $O(n^2)$. Also, the time is  $O(n^2 \log n)$ with probability $1-O(n^{-c})$ 
for any constant~$c>0$.
\end{theorem}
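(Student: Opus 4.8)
The plan is to reuse the entire drift machinery developed for Theorem~\ref{theo:upper}, changing only the definition of the sequence $\gamma_j$ so that the penalty incurred by displaced zero-bits in the rightmost $B$ positions can be overestimated by a \emph{linear} rather than a polynomial-of-degree-seven capping sequence. Concretely, since all $B$ least significant weights now satisfy $w_1=\dots=w_B$, these positions form (part of) a single block, and a potentially helpful mutation flipping a one-bit at position $\ell>B$ can gain $g_\ell$ while the forced zero-bit $j^*$ to maintain feasibility lies in $R(i)$ and contributes a loss proportional to the $g$-value of a position in $\{B,\dots,s(\kappa(i))-1\}$. The key simplification is that the loss no longer needs to be absorbed by the large factor $75B$; I would therefore redefine
\[
\gamma_j \coloneqq \begin{cases} 1 & \text{if } j\le B,\\ c\,(j-B) & \text{otherwise,}\end{cases}
\]
for a suitable constant $c>0$, so that $g_\ell\le \gamma_\ell = c(\ell-B)$ and hence $\ell-B \ge g_{s(i)}/c$, giving a \emph{linear} relationship between $\ell-B$ and $g_{s(i)}$ instead of the seventh-root relationship used before.

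First I would re-examine Lemma~\ref{lem:drift-under-ai} under the new $\gamma$-sequence. The left contribution bound $\E{\Delta_L(i)\mid A_{i,\ell}\cap S_{i,\ell}}\cdot\Pr(S_{i,\ell}\mid A_{i,\ell})\ge g_{s(i)}/e$ carries over verbatim, since its proof used only the block structure and the conditioning, not the specific form of $\gamma$. For the right contribution I would recompute the sum $\sum_{k=B}^{s(\kappa(i))-1}\gamma_k$ with the linear $\gamma_k=c(k-B)$, obtaining a quantity of order $c(s(\kappa(i))-B)^2/2$, and then use $g_{s(\kappa(i))}=\gamma_{s(\kappa(i))}=c(s(\kappa(i))-B)$ together with $g_{s(i)}\ge g_{s(\kappa(i))}$ to conclude that $\E{\Delta_R(i)\mid A_{i,\ell}}$ is bounded by a constant times $g_{s(i)}$. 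Choosing $c$ large enough (the role previously played by the constant $75$) ensures $\E{\Delta_t\mid A_{i,\ell}}\ge c'\,g_{s(i)}$ for some positive constant $c'$ when $\ell>B$, while the unhelpful case $\ell\le B$ still yields $\E{\Delta_t\mid A_{i,\ell}}\ge -3/2$ exactly as before, since $g_\ell=1$ for $\ell\le B$ is unchanged.

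Next I would redo the aggregation step that proves Lemma~\ref{lem:drift-statement}. The crucial change is that with the linear capping, the bound analogous to~\eqref{eq:bound-ai-ell} now reads $\Prob(A_{i,\ell})\ge \max\{\ell-B,\Hell\}/(en^2)$ with $\ell-B\ge g_{s(i)}/c$, so the factor $g_{s(i)}\cdot\Prob(A_{i,\ell})$ becomes proportional to $(g_{s(i)})^2/(cn^2)$ rather than $(g_{s(i)})^{8/7}/(B^{2/7}n^2)$. Summing over the one-bits outside the rightmost $B$ positions and applying Lemma~\ref{lem:convexity} with exponent $C=2$ (instead of $C=8/7$) gives $\sum_{i\in\tilde I}(g_i)^2 \ge (\sum_{i\in\tilde I}g_i)^2/\card{\tilde I} \ge (g(x^{(t)}))^2/B$, yielding a drift bound of the form
\[
\E{X_t-X_{t+1}\mid X_t}\ge \Omega\!\left(\frac{1}{n^2}\min\left\{\frac{X_t^2}{B},X_t\right\}\right).
\]
Feeding this into the variable drift theorem, the integral $\int 1/h$ is then dominated by $\int_{B}^{O(n^?)} B/x^2\,\mathrm{d}x = O(1)$, so the $\log B$ factor disappears and the expected optimisation time becomes $O(n^2)$; the tail bound follows from the multiplicative drift theorem using the simple bound $\Omega(X_t/n^2)$ exactly as in the proof of Theorem~\ref{theo:upper}.

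The main obstacle I anticipate is verifying that the new potential function $g$ still satisfies the upper bound $X_t=O(\mathrm{poly}(n))$ needed to cap the integral and to make $\ln(X_t/\smin)=O(\log n)$ for the tail bound — with linear $\gamma$ this is easily $O(n^2)$ per weight and $O(n^3)$ overall, which is harmless. A subtler point is that I must check the weight-capping inequalities~\eqref{eq:weightcapsthree}--\eqref{eq:weightcapstwo} still hold, since these depend on $g_j=\min\{\gamma_j,g_{j-1}w_j/w_{j-1}\}$ being consistent with the new $\gamma$; because the equivalent recursive definition of $g_j$ is unchanged in form, the monotonicity and ratio properties are preserved regardless of which $\gamma$ is used. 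The truly delicate part will be confirming that the constant $c$ can be chosen uniformly so that both the positive-drift constant $c'$ stays bounded away from zero and the linking argument (each helpful event linked to at most $B$ unhelpful events, using the factor-$e$ gap from~\eqref{eq:factor-e-difference}) still closes; here I would need $g_\ell\ge cB$-type lower bounds for $\ell>B$ analogous to the old $g_\ell\ge 75B$, which the hypothesis $w_1=w_B$ guarantees because the rightmost block already absorbs all $B$ least significant positions, so the next distinct weight starts at index exactly $B+1$ and the linear capping is genuinely active there.
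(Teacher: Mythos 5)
Your overall plan---reuse the drift machinery and only weaken the capping sequence $\gamma$---is indeed the paper's strategy, but the specific modification you propose breaks the argument in two places, and you miss the one structural fact that the hypothesis $w_1=w_B$ actually buys. The paper's key observation is that when $w_1=w_B$ the block containing position $1$ already covers all of $\{1,\dots,B\}$, so every event $A_{1,\ell}$ (and hence every \emph{unhelpful} mutation with $\ell\le B$) is impossible: it would strictly increase the $f$-value and be rejected. This is what eliminates the linking argument against $B$ unhelpful events and therefore the factor $\Theta(B)$ in $\gamma_j$; the paper keeps the degree-$7$ polynomial and merely replaces $75B(j-B)^7$ by $8(j-B)^7$. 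You instead retain the $-3/2$ bound for unhelpful mutations and the linking argument, which forces you to need $g_\ell=\Omega(B)$ for $\ell>B$; your linear capping gives $\gamma_{B+1}=c$, a constant, and your closing claim that $w_1=w_B$ "guarantees" a $g_\ell\ge cB$ bound is a non sequitur---the recursion $g_j=\min\{\gamma_j,g_{j-1}w_j/w_{j-1}\}$ caps $g_{B+1}$ at $c$ no matter what the weights are.

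More fatally, the linear capping destroys the positive-drift estimate in the analogue of Lemma~\ref{lem:drift-under-ai}. The loss term is dominated by the forced zero-bit $j^*$, uniform on roughly $\{B,\dots,s(\kappa(i))-1\}$, whose expected $g$-value with $\gamma_k=c(k-B)$ is about $\tfrac{1}{d}\sum_{k=1}^{d}ck\approx cd/2=g_{s(\kappa(i))}/2$ where $d=s(\kappa(i))-B$. The gain is $g_{s(i)}/e$, and in the case $i=\kappa(i)$ these are $g/2$ versus $g/e$ with $1/2>1/e$, so the lower bound on $\E{\Delta_t\mid A_{i,\ell}}$ becomes negative. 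Crucially, $c$ multiplies both the gain and the loss and cancels, so "choosing $c$ large enough" cannot rescue this; what controls the loss-to-gain ratio is the polynomial degree $p$ (one needs roughly $2/(p+1)<1/e$, which is why the paper uses $p=7$). Finally, even if your drift bound $\Omega\bigl(\min\{X_t^2/B,X_t\}/n^2\bigr)$ held, it does not yield $O(n^2)$: since $x^2/B\le x$ exactly when $x\le B$, the variable-drift integral contains $\int_1^{B} Bn^2/x^2\,\mathrm{d}x=\Theta(Bn^2)$ (and already $\smin/h(\smin)=\Theta(Bn^2)$), plus $\int_B^{X_0}n^2/x\,\mathrm{d}x=\Theta(n^2\log(X_0/B))$; you evaluated the wrong branch of the minimum over the wrong range. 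The paper avoids this by keeping the exponent $8/7$ and sharpening the aggregation with a case distinction on $\Hell$ versus $\sqrt{g(x^{(t)})}$, which upgrades the drift to $\Omega(X_t^{15/14}/n^2)$ and makes the integral $O(1)\cdot n^2$.
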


From now on, we assume that we have been given a fitness function with the property $w_1=w_B$. 
The key idea for the proof of Theorem~\ref{theo:upper-modified} is as follows:  since $w_1=w_B$,  
block~$1$ as defined in Definition~\ref{def:potential} includes the positions $1,\dots,B$ and, if $s(2)>B+1$, 
some positions further to the left, 
more precisely $B+1,\dots,s(2)-1$. Clearly, all events $A_{1,\ell}$, \ie, where the leftmost block flipping more ones than zeros is block~$1$, 
are impossible since they would increase the $f$\nobreakdash-value. 
Hence, the previously considered so-called unhelpful mutations (\ie, mutations where the leftmost flipping one-bit in blocks 
flipping more ones than zeros is among the $B$ rightmost positions) are impossible and we do no longer 
have to handle the negative drift that could arise from such mutations in the context of 
Theorem~\ref{theo:upper-modified}. In turn, this now allows us to use a potential function that assigns smaller weights 
to the positions $B+1,\dots,n$ than in the general case.


The modified potential function used here is constructed in the same way as before but uses the following new $\gamma$-values:
For $j\in [n]$, we let 
\begin{equation*} \label{eqn:gamma_i-modified-1}
\gamma_j \coloneqq 
\begin{cases}
1 & \text{ if $j\le B$,}\\
 8(j-B)^7  & \text {otherwise}.
\end{cases}
\end{equation*}
Hence, the $\gamma$-values in the second case are by a factor~$\Theta(B)$ smaller compared to Definition~\ref{def:potential}. 


Using the modified potential function, which we still call~$g$, we obtain the following analogue of Lemma~\ref{lem:drift-under-ai}.

\begin{lemma}
\label{lem:drift-under-ai-mod}
Consider any $t\ge 0$, $i\in [m]$ and $\ell\in K_i$ such that $\Prob(A_{i,\ell})>0$. Then 
$\E{\Delta_t\mid A_{i,\ell}} \ge  0.011g_{s(i)}$ if $\ell > B$ and  $\E{\Delta_t\mid A_{i,\ell}}\ge 0$ otherwise.
\end{lemma}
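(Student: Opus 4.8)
The plan is to adapt the proof of Lemma~\ref{lem:drift-under-ai} to the modified $\gamma$-values, exploiting the crucial structural simplification that $w_1=w_B$ brings: the events $A_{i,\ell}$ with $\ell\le B$ are now impossible. Concretely, the case $\ell\le B$ corresponds to an unhelpful mutation whose leftmost flipping one-bit lies among the $B$ rightmost positions. But since $w_1=w_B$, all these positions (and possibly some further to the left) lie in block~$1$, so if the leftmost flipping one-bit is at position $\ell\le B$, then either block~$1$ itself would be the leftmost block flipping more ones than zeros (contradicting the fourth item of Definition~\ref{def:ai-event}, as it would increase the $f$-value and the constraint is tight), or there is no feasible accepting mutation at all. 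Hence $\Prob(A_{i,\ell})=0$ whenever $\ell\le B$, and the bound $\E{\Delta_t\mid A_{i,\ell}}\ge 0$ holds vacuously in that case. This is the main conceptual point and it is not really an obstacle — it is precisely what the modified analysis is designed to exploit.

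For the case $\ell>B$, I would re-run the computation from the proof of Lemma~\ref{lem:drift-under-ai} verbatim, replacing $\gamma_k=75B(k-B)^7$ by the new value $8(k-B)^7$. The left-contribution bound \eqref{eq:leftcontrib}, namely $\E{\Delta_L(i)\mid A_{i,\ell}\cap S_{i,\ell}}\cdot\Pr(S_{i,\ell}\mid A_{i,\ell})\ge g_{s(i)}/e$, is unchanged since it does not depend on the scale of $\gamma$. For the right-contribution, the same estimate yields
\[
\E{\Delta_R(i)\mid A_{i,\ell}}\le \frac{2}{s(\kappa(i))-B}\sum_{k=B}^{s(\kappa(i))-1}\gamma_k,
\]
and plugging in the new $\gamma_k$ gives $\sum_{k=B}^{s(\kappa(i))-1}\gamma_k\le 1+\frac{8}{8}\bigl((s(\kappa(i))-B)^8-1\bigr)\le (s(\kappa(i))-B)^8$, using $g_{s(\kappa(i))}=\gamma_{s(\kappa(i))}=8(s(\kappa(i))-B)^7$ and $g_{s(i)}\ge g_{s(\kappa(i))}$ to rewrite this as at most $(s(\kappa(i))-B)g_{s(i)}/8$. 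The upshot is
\[
\E{\Delta_t\mid A_{i,\ell}}\ge \frac{g_{s(i)}}{e}-\frac{2g_{s(i)}(s(\kappa(i))-B)}{8(s(\kappa(i))-B)}=\left(\frac{1}{e}-\frac14\right)g_{s(i)}\ge 0.011 g_{s(i)},
\]
which is the claimed positive drift.

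**The hard part** is not the calculation above but ensuring that the subsequent global drift argument (the analogue of the proof of Lemma~\ref{lem:drift-statement}) now delivers the improved $O(n^2)$ bound rather than $O(n^2\log B)$. The key is that with the smaller $\gamma$-values the factor-$B$ penalty from the unhelpful mutations in \eqref{eq:drift-in-g-and-prob-ai} has vanished entirely, so we no longer need $g_\ell\ge 75B$ to absorb a negative term; instead the new $\gamma_j=8(j-B)^7$ gives $g_\ell\le 8(\ell-B)^7$ and hence $\ell-B\ge (g_{s(i)}/8)^{1/7}$, a bound free of any $B$-dependence. Re-deriving the analogue of \eqref{eq:prob-ai-ell-bound-oneseventh-max} and \eqref{eq:e-ai-ell-bound-max} and applying Lemma~\ref{lem:convexity} should then yield a drift of the form $\E{X_t-X_{t+1}\mid X_t}\ge \Omega(X_t^{8/7}/(n^2))$ without the $B^{-2/7}$ attenuation, so that the variable drift integral $\int_1^{O(n^9)} x^{-8/7}\,\mathrm{d}x=O(1)$ converges to a constant and the total optimisation time becomes $O(n^2)$. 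I would verify carefully that the convexity step and the bookkeeping of which blocks contribute go through with the new constants, since that is where a hidden $\log B$ could sneak back in if the $B$-scaling is not fully eliminated.
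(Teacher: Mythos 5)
Your proof of the lemma is correct and follows the paper's argument essentially verbatim: the $\ell\le B$ case is dispatched by observing that positions $1,\dots,B$ all lie in block~$1$ when $w_1=w_B$, so the unhelpful events are impossible, and the $\ell>B$ case reruns the $\Delta_R(i)$ estimate with the new $\gamma_k=8(k-B)^7$ to obtain $\E{\Delta_t\mid A_{i,\ell}}\ge (1/e-1/4)g_{s(i)}\ge 0.011 g_{s(i)}$. One caution about your closing remarks, which concern the subsequent global drift lemma rather than this one: the convexity step still carries a factor $\card{\tilde I}^{-1/7}$ (the number of one-bits left of position~$B$), so one does not get $\Omega(X_t^{8/7}/n^2)$ outright; the paper instead distinguishes cases according to the number of missing one-bits among the $B$ rightmost positions and settles for $\Omega(X_t^{15/14}/n^2)$, which still integrates to $O(n^2)$.
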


\begin{proof}
We describe the required changes compared to the proof of Lemma~\ref{lem:drift-under-ai}. As argued before, 
unhelpful mutations are not possible in the case $w_1=w_B$ so that $\E{\Delta_t\mid A_{i,\ell}}\ge 0$ for $\ell \le B$. 

We are left  with the case $\ell>B$. The analysis differs only at the point where we estimate the 
contribution to the drift of the bits in~$R(i)$. Again, 
this contribution is 	bounded from above by 
\[
\frac{2}{s(\kappa(i))-B}\sum_{k=B}^{s(\kappa(i))-1} \gamma_k,
\]
which, using our new definition of~$\gamma_k$, is at most
\begin{align*}
\sum_{k=B}^{s(\kappa(i))-1} \gamma_k & 
 \le 1 + \sum_{k=B+1}^{8s(\kappa(i))-1} 8(k-B)^7 \\
& \le 1 + \frac{1}{8}\left(8(s(\kappa(i))-B)^8-1^8)\right) \le (s(\kappa(i))-B)^8\\
& \le 
\frac{(s(\kappa(i))-B) g_{s(i)}}{8},  
\end{align*}
where we used that $g_{s(\kappa(i))} = \gamma_{s(\kappa(i))} = 8(s(\kappa(i))-B)^7$ according to the definition 
of $\kappa(i)$ as well as $g_{s(i)} \ge g_{s(\kappa(i))}$.

Along with~\eqref{eqn:expectationcentral} and~\eqref{eq:leftcontrib}, which still hold for the modified 
potential function, we have 
\begin{align*}
\E{ \Delta_t \mid A_{i,\ell}}  & \ge
 \frac{g_{s(i)} }{e} - 
\frac{2g_{s(i)}(s(\kappa(i))-B)}{8(s(\kappa(i))-B)} \ge 0.11 g_{s(i)}
\end{align*}
as suggested.
\end{proof}

Using Lemma~\ref{lem:drift-under-ai-mod}, we next show the following lemma, which corresponds to 
Lemma~\ref{lem:drift-statement}.

\begin{lemma}
\label{lem:drift-statement-mod}
Considering a random variable $X_t= g(x^{(t)})$, with the modified potential function $g$ defined above, 
and $x^{(t)}$ is the random search point of the \ea at time $t$, for all time steps $t$ we have
\begin{enumerate}
\item
$\E{X_t-X_{t+1} \mid X_t} \ge 
\frac{ 0.055 X_t^{15/14}} { en^2}$.
\item
$1 \le X_t = O(n^8) $ if $x^{(t)}$ is not optimal.
\end{enumerate}
\end{lemma}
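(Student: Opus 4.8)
The plan is to follow the proof of Lemma~\ref{lem:drift-statement} almost verbatim, replacing Lemma~\ref{lem:drift-under-ai} by Lemma~\ref{lem:drift-under-ai-mod} and inserting one new estimate at the end. As before I fix a non-optimal (and, pessimistically, tight) search point $x^{(t)}$ and decompose the drift over the disjoint events $A_{i,\ell}$:
\[
\E{X_t-X_{t+1}\mid X_t}=\sum_{i\in[m],\,\ell\in K_i,\,\Prob(A_{i,\ell})>0}\E{\Delta_t\mid A_{i,\ell}}\,\Prob(A_{i,\ell}).
\]
The decisive simplification is that Lemma~\ref{lem:drift-under-ai-mod} now gives $\E{\Delta_t\mid A_{i,\ell}}\ge 0$ for every unhelpful event ($\ell\le B$), so these terms are simply dropped rather than compensated. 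I therefore skip the balancing argument of the general case (linking each helpful event to up to $B$ unhelpful ones) and keep the full helpful-case bound $0.11\,g_{s(i)}$ derived in the proof of Lemma~\ref{lem:drift-under-ai-mod}.

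Next I reuse the probability estimate \eqref{eq:bound-ai-ell}, $\Prob(A_{i,\ell})\ge \max\{\ell-B,\Hell\}/(en^2)$, the tight case being pessimistic since a non-tight point accepts the single flip of $\ell$. The only change from the new $\gamma$-values is that $g_{s(i)}=g_\ell\le\gamma_\ell=8(\ell-B)^7$ now yields $\ell-B\ge(g_{s(i)}/8)^{1/7}$, \emph{without} the factor $B^{-1/7}$ of the general case. Summing the helpful contributions over the $\card{K_i\cap\tilde{I}}$ one-bits of each block and multiplying by the probability, the drift is bounded below by a sum of per-block terms $\tfrac{0.11}{en^2}\card{K_i\cap\tilde{I}}\max\{8^{-1/7}g_{s(i)}^{8/7},\,g_{s(i)}\Hell\}$, where I use that $\Hell=\card{\tilde{I}}$ for a tight point.

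The genuinely new step, and the one I expect to be the crux, is extracting a $B$-free bound from this maximum. Since each summand is a maximum, the drift $D:=\E{X_t-X_{t+1}\mid X_t}$ dominates both the sum of the first arguments and the sum of the second. Applying Lemma~\ref{lem:convexity} with $C=8/7$ to the first and using $\sum_{i\in\tilde{I}}g_i\ge X_t$ from \eqref{eq:g-respects-ones-and-zeros-1} for the second, I obtain the two lower bounds $D\ge \tfrac{c_A}{en^2}\card{\tilde{I}}^{-1/7}X_t^{8/7}$ and $D\ge \tfrac{c_B}{en^2}\,\Hell\,X_t$ with $c_A=0.11\cdot 8^{-1/7}$ and $c_B=0.11$. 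Interpolating via $D=\sqrt{D\cdot D}\ge\sqrt{AB}$ and exploiting $\card{\tilde{I}}=\Hell$ then gives
\[
D\ge\frac{\sqrt{c_Ac_B}}{en^2}\,\Hell^{\,3/7}\,X_t^{15/14}\ge\frac{0.055}{en^2}\,X_t^{15/14},
\]
where the last step uses $\Hell\ge 1$. The exponent $15/14$ is exactly the average of $8/7$ and $1$, which is why the geometric mean lands there (and, being $>1$, later yields a convergent variable-drift integral and the $O(n^2)$ runtime), while the $\Hell$-powers combine to a non-negative exponent, which is precisely what eliminates the $B$-dependence. This proves the first statement. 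The second statement is routine: $g_i\ge 1$ forces $X_t\ge 1$ for non-optimal points, and the smaller $\gamma$-values give $g_i\le\gamma_i\le 8n^7$, hence $X_t\le\sum_{i=1}^n g_i\le 8n^8=O(n^8)$, improving the general $O(n^9)$.
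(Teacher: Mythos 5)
Your proof is correct and follows the paper's own argument step for step --- same decomposition over the events $A_{i,\ell}$, same use of Lemma~\ref{lem:drift-under-ai-mod} to discard the unhelpful terms, same probability bound $\Prob(A_{i,\ell})\ge \max\{\ell-B,\Hell\}/(en^2)$ with the new relation $\ell-B\ge(g_{s(i)}/8)^{1/7}$, and the same $O(n^8)$ bound for the second statement. The one place you genuinely deviate is the final combination of the two arguments of the maximum. The paper does a case distinction on whether $\Hell\le\sqrt{X_t}$ or $\Hell>\sqrt{X_t}$: in the first case it uses the convexity bound $\sum_{i\in\tilde I}g_i^{8/7}\ge \Hell^{-1/7}X_t^{8/7}$ together with $\Hell^{-1/7}\ge X_t^{-1/14}$, and in the second case it uses the $g_{s(i)}\Hell$ term to get $X_t^{3/2}\ge X_t^{15/14}$. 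You instead observe that the drift simultaneously dominates both lower bounds and take their geometric mean, so the powers of $\Hell$ combine to $\Hell^{3/7}\ge 1$ and the exponents of $X_t$ average to $15/14$ directly. Both routes give the same exponent and a constant comfortably above $0.055$ (your $\sqrt{0.11^2\cdot 8^{-1/7}}\approx 0.095$); your interpolation is arguably cleaner in that it makes transparent why $15/14$ is the right exponent and why the $B$-dependence disappears, whereas the paper's case split is more elementary. Like the paper, you assume tightness pessimistically and rely on $\card{\tilde I}=\Hell$ and $\Hell\ge 1$ for a tight non-optimal point; this matches the paper's own (equally implicit) treatment of the non-tight case, so it is not a gap relative to the reference proof.
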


\begin{proof}
The proof starts in the same ways as the one of Lemma~\ref{lem:drift-statement-mod} (hereinafter called the \emph{original proof}). The drift 
 can be expressed as
\begin{equation*}
\E{X_t-X_{t+1} \mid X_t}
=  \sum_{i \in [m], \ell\in K_i,\Prob(A_{i,\ell})>0} \E{\Delta_{t} \mid A_{i,\ell}}\cdot  \Prob(A_{i,\ell}).
\end{equation*}
Using Lemma~\ref{lem:drift-under-ai-mod}, the last expression is at least
\begin{align}
& \sum_{i\in [m], \ell\in K_i\cap\{B +1,\dots,n\},\Prob(A_{i,\ell})>0}
0.11g_{s(i)}  \Prob(A_{i,\ell})  ,
\label{eq:drift-in-g-and-prob-ai-mod}
\end{align}
which is the first difference to the original proof since we no longer have to consider unhelpful mutations.

Proceeding as in the original proof, we still have 
\begin{equation}
\Prob(A_{i,\ell})\ge \frac{\max\{\ell-B, \Hell\}}{n^2}\left(1-\frac{1}{n}\right)^{n-2}\ge \frac{\max\{\ell-B,\Hell\}}{en^2}
\label{eq:bound-ai-ell-mod}
\end{equation} if $A_{i,\ell}$ is possible. As in the original proof, we  now relate 
the expression $\ell-B$ to the factor $g_{s(i)}$ appearing in~\eqref{eq:drift-in-g-and-prob-ai-mod}. Adjusting the 
to the new $\gamma$-value, we have   
$\ell-B \ge (g_{s(i)}/8)^{1/7}\ge (1/2)(g_{s(i)})^{1/7}$. 
Plugging this into Equation~\eqref{eq:bound-ai-ell-mod}, 
we obtain (if $A_{i,\ell}$ is possible) that 
\begin{equation}
\Prob(A_{i,\ell})\ge \frac{\max\{(g_{s(i)})^{1/7}/2,  \Hell\} }{en^2}.
\label{eq:prob-ai-ell-bound-oneseventh-max-mod}
\end{equation}
With the same arguments as in the original proof, we therefore have 
\begin{align}
& \E{X_t-X_{t+1} \mid X_t}  \notag\\
& \ge 
\sum_{i\in [m]\mid K_{i}\cap \tilde{I} \neq \emptyset} \card{K_i\cap \tilde{I}} 0.11g_{s(i)} \Prob(A_{i,\ell})
\notag\\
& \ge 
\sum_{i\in [m]\mid K_{i}\cap \tilde{I} \neq \emptyset} \card{K_i\cap \tilde{I}}\frac{ 0.055 \max\{(g_{s(i)})^{8/7}, g_{s(i)}\cdot \Hell\} }{en^2},
\label{eq:e-ai-ell-bound-max-mod}
\end{align}
which differs from \eqref{eq:e-ai-ell-bound-max} solely because of the modified potential function.
Deviating from the original proof, we now distinguish between two cases according to $\Hell$.

\textbf{Case~$1$:} $\Hell \le \sqrt{g(x^{(t)})}$. Then we use the term $(g_{s(i)})^{8/7}$ from the maximum in 
\eqref{eq:e-ai-ell-bound-max-mod} and obtain 
\begin{align*}
& \E{X_t-X_{t+1} \mid X_t}  \\
& \ge 
\sum_{i\in [m]\mid K_{i}\cap \tilde{I} \neq \emptyset} \card{K_i\cap \tilde{I}}\frac{ 0.055 (g_{s(i)})^{8/7} }{en^2} \\
& \ge \frac{0.055 (g(x^{(t)})^{8/7}  )}{\Hell^{1/7}en^2} 
 \ge\frac{0.055 g(x^{(t)})  }{2en^2} ,
\end{align*}
where we used the estimate \[
\sum_{i\in \tilde{I}}(g_i)^{8/7} \ge \left(\sum_{i\in \tilde{I}}(g_i)\right)^{8/7} (\card{\tilde{I}})^{-1/7} \ge B^{-1/7} \left(\sum_{i\in \tilde{I}}(g_i)\right)^{8/7}\]
proved in Lemma~\ref{lem:convexity} 
along with \eqref{eq:g-respects-ones-and-zeros-1}.
%
Using the assumption $\Hell\le \sqrt{g(x^{(t)})}$ from the case analysis, we finally have 
\begin{align*}
& \E{X_t-X_{t+1} \mid X_t}  
\ge \frac{0.055 (g(x^{(t)}))^{8/7}  }{g(x^{(t)})^{1/14}en^2} = 
\frac{0.055 (g(x^{(t)}))^{15/14}  }{en^2}.
\end{align*}

\textbf{Case~$2$:} $\Hell> \sqrt{g(x^{(t)})}$. Then we use the term $\Hell$ from the maximum 
in \eqref{eq:e-ai-ell-bound-max-mod} and obtain 
\begin{align*}
& \E{X_t-X_{t+1} \mid X_t}  \\
& \ge 
\frac{0.055 g(x^{(t)}) h(x^{(t)}  )}{en^2} \ge 
\frac{0.055 g(x^{(t)}) \sqrt{g(x^{(t)})}}{en^2}\\
& = 
\frac{0.055 (g(x^{(t)}))^{3/2}  }{en^2}.
\end{align*}

Altogether, 
we therefore have
\[
\E{X_t-X_{t+1} \mid X_t}   \ge \frac{0.055(X_t)^{15/14}}{en^2}. 
\]
%
%
%
This proves the first statement of Lemma~\ref{lem:drift-statement-mod}. The second statement is proved in the same way as in 
Lemma~\ref{lem:drift-statement-mod}, taking into account the smaller weights of the modified potential function \qed\end{proof}

With these two lemmas in place, we show our result.

\begin{proofof}{Theorem~\ref{theo:upper-modified}}
We apply the variable drift theorem (Theorem~\ref{theo:variableDrift}) given 
the statements of Lemma~\ref{lem:drift-statement-mod}. 
Using that $X_t\ge 1\eqqcolon \smin$ 
and $X_t=O(n^8)$ as well as the drift bound 
 \[h(X_t)\coloneqq \frac{ 0.055 (X_t)^{15/14}} { en^2} \text{,}\]
the expected optimisation time  is bounded by 
\begin{align*}
& \frac{\smin}{h(\smin)} + \int_{\smin}^{n^8} \frac{1}{h(x)} \,\mathrm{d}x \\ 
& = O(n^2) + \frac{en^2}{0.055}\left( 
\int_{1}^{n^8} \frac{1}{x^{15/14}} \mathrm{d}x \right)\\
& = O(n^2) + O(n^2) = O(n^2),
\end{align*}
which completes the proof of the bound on the expected time.

For the tail bound we use the multiplicative drift theorem (Theorem~\ref{theo:multdrift-upper}) 
with the simple bound $\E{X_t-X_{t+1} \mid X_t} \ge \frac{ 0.055 X_t} { en^2} $
 from  Lemma~\ref{lem:drift-statement}, along with $X_t = O(n^8)$ that implies $\ln(X_t/\smin)=O(\log n)$. 
Note that the multiplicative drift theorem gives the  upper bound $O(n^2\log n)$ on the expected optimisation time so that  the tail bound  
can be obtained by setting $r=c\ln n$.\qed
\end{proofof}

We note that upper bound given in Theorem~\ref{theo:upper-modified} is tight. 
The lower bound of $\Omega(n^2)$ 
for the \ea is similar to Theorem~\ref{theo:RLSLowerBound} and follows by the fact 
that a special $2$-bit flip is needed if the current solution is non-optimal and has 
exactly $B$ 1-bits. For a detailed analysis see Theorem~10 in~\cite{FRIEDRICH2018}.

\section{Minimisation versus Maximisation}
\label{sec:min-vs-max}
The results we presented in this paper have been formulated with respect 
to the minimisation of linear functions under a uniform lower 
constraint $x_1+\dots+x_n\ge B$. This perspective of minimisation fits more naturally 
the minimisation of potential functions used in drift theorems (Theorems~\ref{theo:variableDrift} 
and~\ref{theo:multdrift-upper}) and is therefore de-facto standard in many recent papers dealing 
with the optimisation of linear functions \cite{DJWLinearRevisited, WittCPC13}.

However, previous works about the optimisation of linear functions under constraints considered 
the maximisation of a linear function under an upper uniform constraint $x_1+\dots+x_n\le B$. It is not difficult 
to see that our main theorems (Theorems~\ref{theo:RLS} and~\ref{theo:upper}) also hold for this scenario. 
Since this is rather straightforward to realise for \rls, we only discuss the result for the \ea now. 
The potential function~$g$ from Definition~\ref{def:potential} would have to be adapted to assign weight~$0$ to the 
$B$ most significant positions and increasing weights from bits $1$ to $n-B$ in the same way as before, with the exception 
that $0$-bits instead of $1$-bits would contribute: roughly speaking we would define 
$g(x)\coloneqq \sum_{i=1}^{n-B} g_i(1-x_i)$. 

Interestingly, the time to reach the feasible region analysed in Lemma~\ref{lem:RLSFeasible} will be 
$O(n\log(n/B))$ instead of $O(n\log(n/(n-B))$ 
in the scenario of maximisation, as proved in earlier work \cite{FRIEDRICH2018}. This is due to the fact 
that a large~$B$ corresponds to a large infeasible region in the maximisation case but a small one in the 
minimisation case. However, in both cases the time to reach the feasible region is always bounded by an asymptotically 
smaller expression than our bound for the time to find an optimal search point 
after having reached the feasible region.

\section{Experimental Supplements}
\label{sec:experiments}

We carry out experimental investigations for the (1+1) EA that complement the theoretical results  provided in this paper.
%
Our experiments provide additional insights with respect to two aspects. Firstly, we investigate the runtime of the (1+1)~EA in dependence of the given constraint bound $B$. Secondly, we study higher mutation rates than $1/n$, namely  $2/n$ and $3/n$.  We consider the special case of the objective function where $w_i=i$, $1 \leq i \leq n$. The objective function has the important property that all bits have different weights. For a given bound $B$ the unique optimal solution consists of the first $B$ bits.

\begin{figure}

\includegraphics[trim=1cm 2.0cm 0cm 2cm, clip=true, scale=0.405]{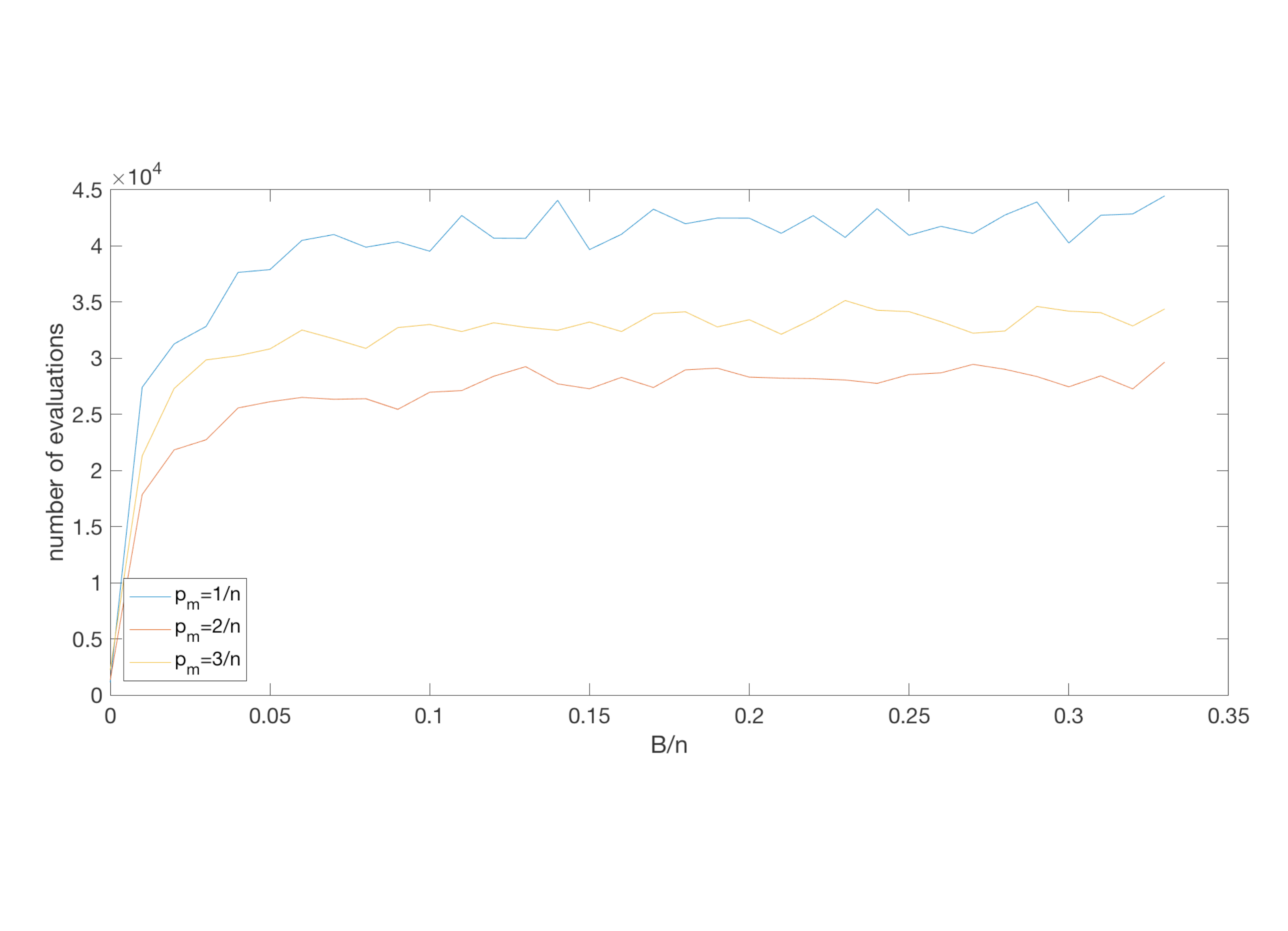}  
\includegraphics[trim=0cm 2.0cm 0cm 2cm, clip=true, scale=0.4]{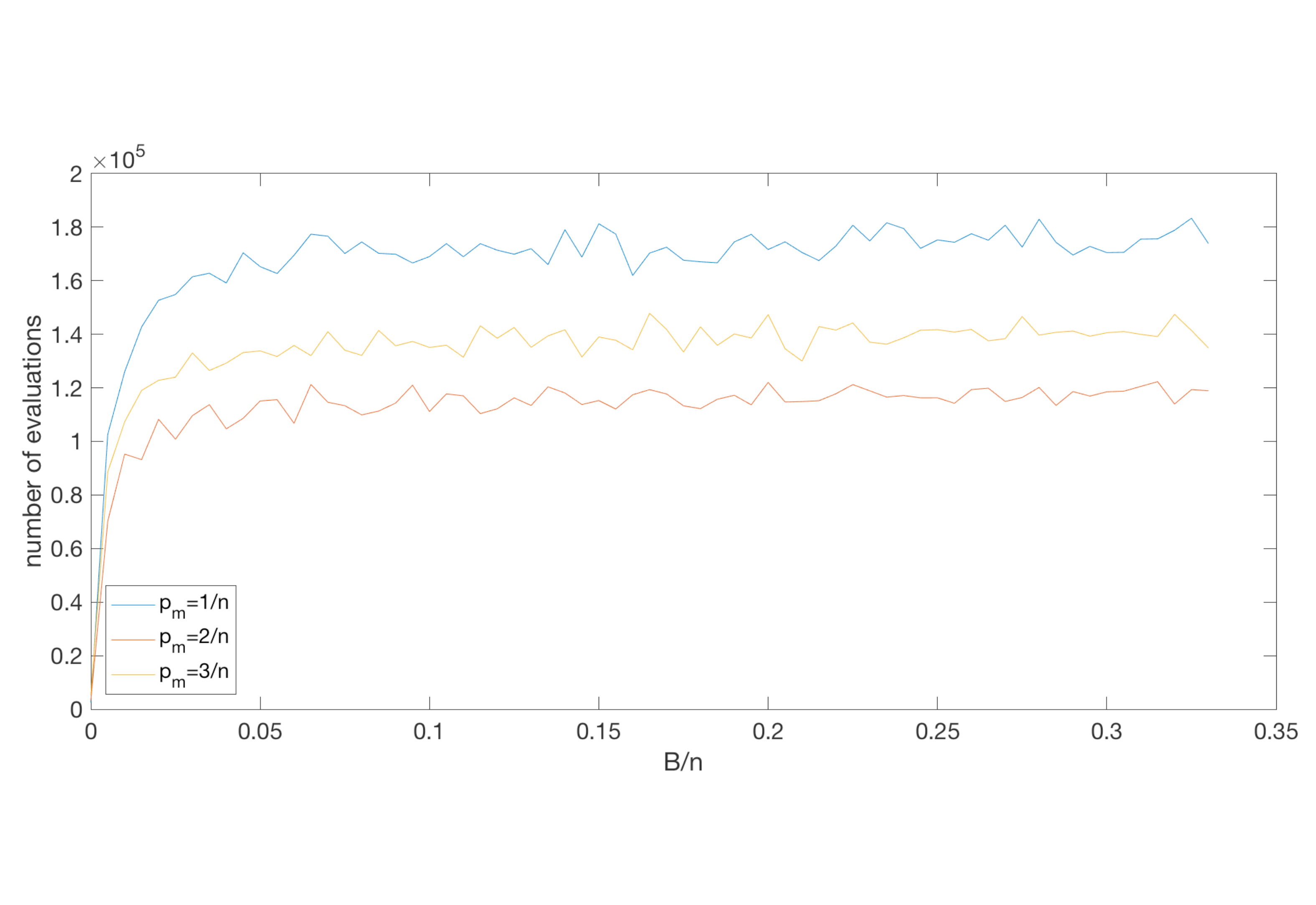}

\caption{Experimental results for $n=100, 200$ (top and bottom) and $B=0, 1, \dots n/3$ using (1+1) EA with mutation rate $p_m= 1/n, 2/n, 3/n$.}
\label{fig:exp1}
\end{figure}

Figure~\ref{fig:exp1} shows our results for $n=100$ and $n=200$. For each value of $B=0, \ldots, n/3$, we carried out $500$ independent runs and the figure shows for each $B$ the average value of these $500$ runs. For $B=0$, the expected runtime of $(1+1)$ EA is $\Theta(n \log n)$ whereas for $B>0$ this value becomes $O(n^2\log^+ B)$ due to the upper bound given in Theorem~\ref{theo:upper}. 

We cannot observe a pattern in dependence of $B$ if $B\not =0$. However, we can observe that there is a clear difference when using mutation rates $2/n$ or $3/n$ in the (1+1) EA for our linear function with the uniform constraint. A mutation rate of $2/n$ performs best in our experiments which we attribute to the fact that mutations flipping $2$ bits are crucial to optimize the objective function. However, also the mutation rate of $3/n$ shows a better runtime behaviour than the mutation rate of $1/n$ although it leads to a slower optimization process than the mutation rate of $2/n$. A possible explanation of the better results for mutation rate $3/n$ over $1/n$ is that 2-bit flips (or multiple bit flips) are essential to optimize the considered constraint problems when having obtained solutions at the constraint boundary.

Figure~\ref{fig:exp2} shows our experimental results for $n=500$ and $n=1000$. The results are consistent with our observations for $n=100, 200$. Again, we do not observe a dependence on the constraint bound value of $B$, $B \not = 0$. However, we can again observe that the optimization process is fastest for mutation rate $2/n$ followed by $3/n$ with both of them having a clear advantage of the standard mutation rate $1/n$.

\begin{figure}
\includegraphics[trim=0cm 2cm 0cm 0cm, clip=true, scale=0.4]{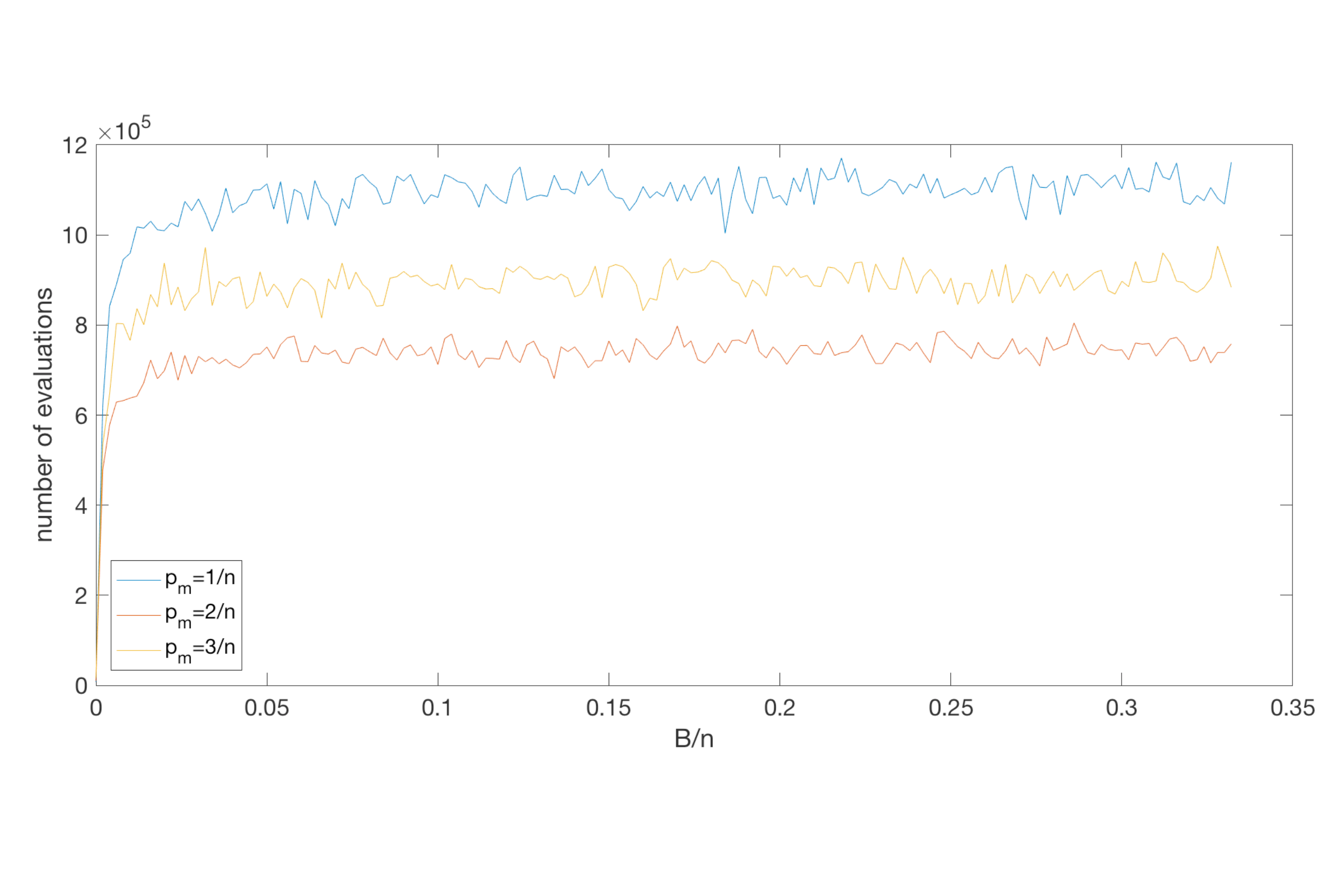}  
\includegraphics[trim=0cm 2cm 0cm 2.0cm, clip=true, scale=0.4]{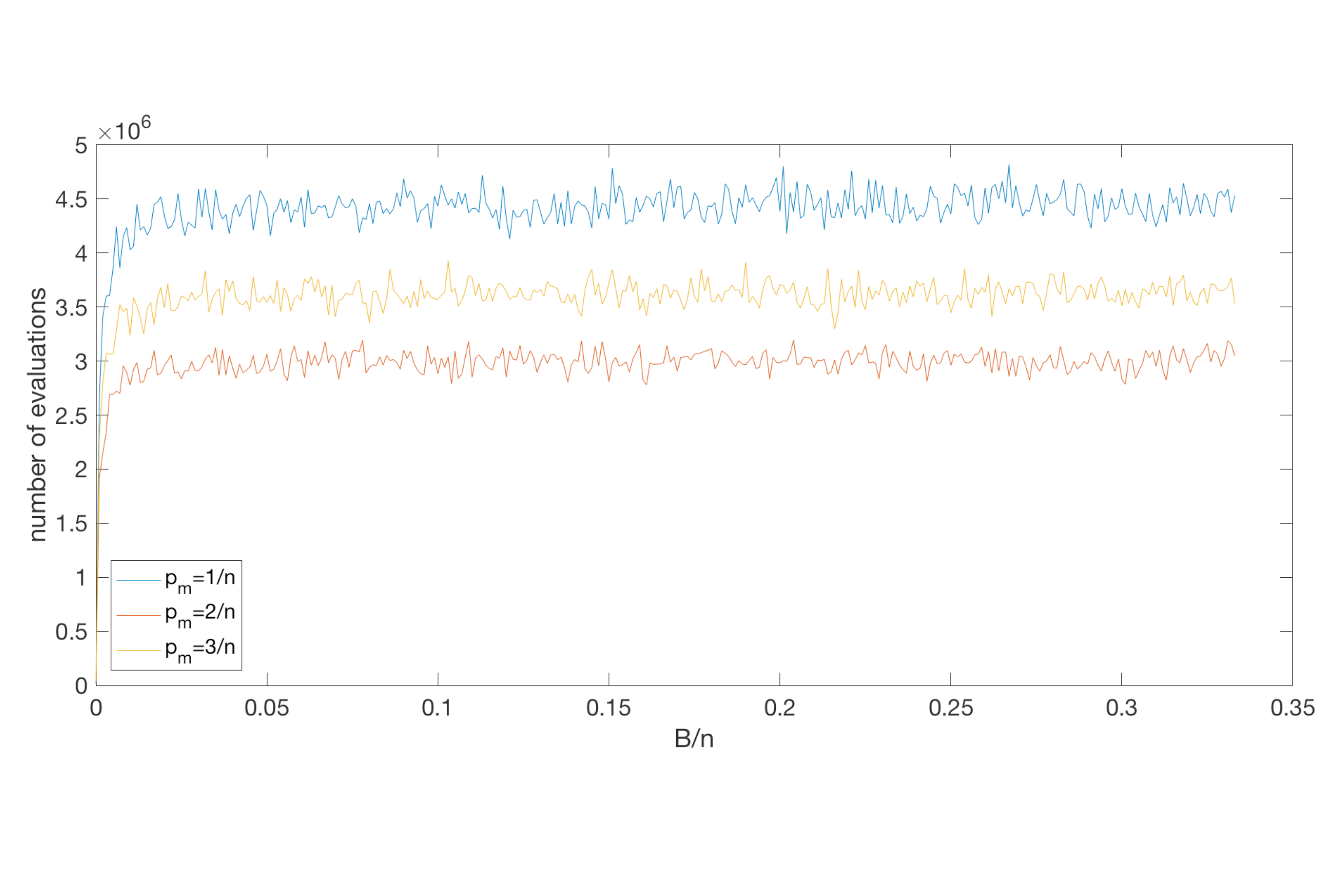}

\caption{Experimental results for $n=500, 1000$ (top and bottom) and $B=0, 1, \ldots n/3$ using (1+1) EA with mutation rate $p_m= 1/n, 2/n, 3/n$.}
\label{fig:exp2}
\end{figure}

\section{Conclusion}
\label{sec:concl}
We have carried out a rigorous theoretical analysis on the expected 
optimisation time of RLS and the \ea on the problem of minimising a linear function 
under uniform constraint. Our results include a tight expected bound of $O(n^2)$ for 
 RLS and a bound of $O(n^2\log^+B)$ for the \ea,  where $B$ is 
the constraint value, \ie, the minimum number of $1$-bits that a solution should have 
to be considered feasible. Both bounds considerably improve over previous results.

We have also proved an upper bound of $O(n^2 \log n)$ for the \ea 
with high probability and a tight bound of $O(n^2)$ in the case that the $B$ smallest weights of 
the function are identical. 
In order to prove our results for the \ea, we have conducted an adaptive 
drift analysis with a potential function that depends on the weights of the linear function 
and the constraint value~$B$. We are optimistic that the developed techniques can be helpful 
in finding upper bounds on the expected optimisation time of the \ea on more complicated problems 
for which currently best upper bounds depend on the weights of the given input. This includes the 
minimum spanning tree problem where the best proven upper bound 
for general graphs is $O(n^2 (\log n+\log w_{\max})))$ and conjectured to be $O(n^2 \log n)$.

\section*{Acknowledgement}
This research has been supported by the Australian Research Council (ARC) through grant DP160102401.

\bibliographystyle{spbasic}

\bibliography{linear-func-constraints}

\end{document}